\newcommand{\gettikzxy}[3]{%
  \tikz@scan@one@point\pgfutil@firstofone#1\relax
  \edef#2{\the\pgf@x}%
  \edef#3{\the\pgf@y}%
}
\theoremstyle{definition}
\newtheorem{definition}{Definition}
\newtheorem{remark}{Remark}
\newtheorem{claim}{Claim}
\newtheorem{example}{Example}
\newtheorem{assumption}{Assumption}
\newcommand{\prob}{\mathbb{P}}
\newcommand{\K}{\mathcal{K}}
\newcommand{\x}{\mathbf{x}}
\newcommand{\name}{Memory$^3$}
\definecolor{Tue-red}{RGB}{0, 134, 139}
\let\c@figure\c@table
\titleformat{\section}{\sffamily\color{Tue-red}\Large\bfseries}{\thesection\enskip\color{gray}\textbar\enskip}{0cm}{} 
\titleformat{\subsection}{\sffamily\color{Tue-red}\large\bfseries}{\thesubsection\enskip\color{gray}\textbar\enskip}{0cm}{} 
\titleformat{\subsubsection}{\sffamily\color{Tue-red}\bfseries}{\thesubsubsection\enskip\color{gray}\textbar\enskip}{0cm}{} 
\title{\name{}: Language Modeling with Explicit Memory}
\date{July 1, 2024}
\author[1]{Hongkang Yang}
\author[1]{Zehao Lin}
\author[1]{Wenjin Wang}
\author[1]{Hao Wu}
\author[1]{Zhiyu Li}
\author[1]{Bo Tang}
\author[1]{Wenqiang Wei}
\author[1]{Jinbo Wang}
\author[1]{Zeyun Tang}
\author[1]{Shichao Song}
\author[1]{Chenyang Xi}
\author[1]{Yu Yu}
\author[1]{Kai Chen}
\author[1]{Feiyu Xiong}
\author[2]{Linpeng Tang}
\author[3,1]{Weinan E\thanks{Also at School of Mathematical Sciences, Peking University and AI for Science Institute\\Corresponding authors: xiongfy@iaar.ac.cn, linpengt@myscale.com, weinan@math.pku.edu.cn}}
\affil[1]{Center for LLM, Institute for Advanced Algorithms Research, Shanghai}
\affil[2]{Moqi Inc}
\affil[3]{Center for Machine Learning Research, Peking University}
\begin{document}

\newgeometry{top=1cm, bottom=1.4cm, left=2.5cm, right=2.5cm}

\maketitle



\begin{abstract}
The training and inference of large language models (LLMs) are together a costly process that transports knowledge from raw data to meaningful computation.
Inspired by the memory hierarchy of the human brain, we reduce this cost by equipping LLMs with explicit memory, a memory format cheaper than model parameters and text retrieval-augmented generation (RAG).
Conceptually, with most of its knowledge externalized to explicit memories, the LLM can enjoy a smaller parameter size, training cost, and inference cost, all proportional to the amount of remaining ``abstract knowledge".
As a preliminary proof of concept, we train from scratch a 2.4B LLM, which achieves better performance than much larger LLMs as well as RAG models, and maintains higher decoding speed than RAG.
The model is named \name{}, since explicit memory is the third form of memory in LLMs after implicit memory (model parameters) and working memory (context key-values).
We introduce a memory circuitry theory to support the externalization of knowledge, and present novel techniques including a memory sparsification mechanism that makes storage tractable and a two-stage pretraining scheme that facilitates memory formation.
\end{abstract}

\begin{figure}[H]
\centering
\includegraphics[width=0.85\textwidth]{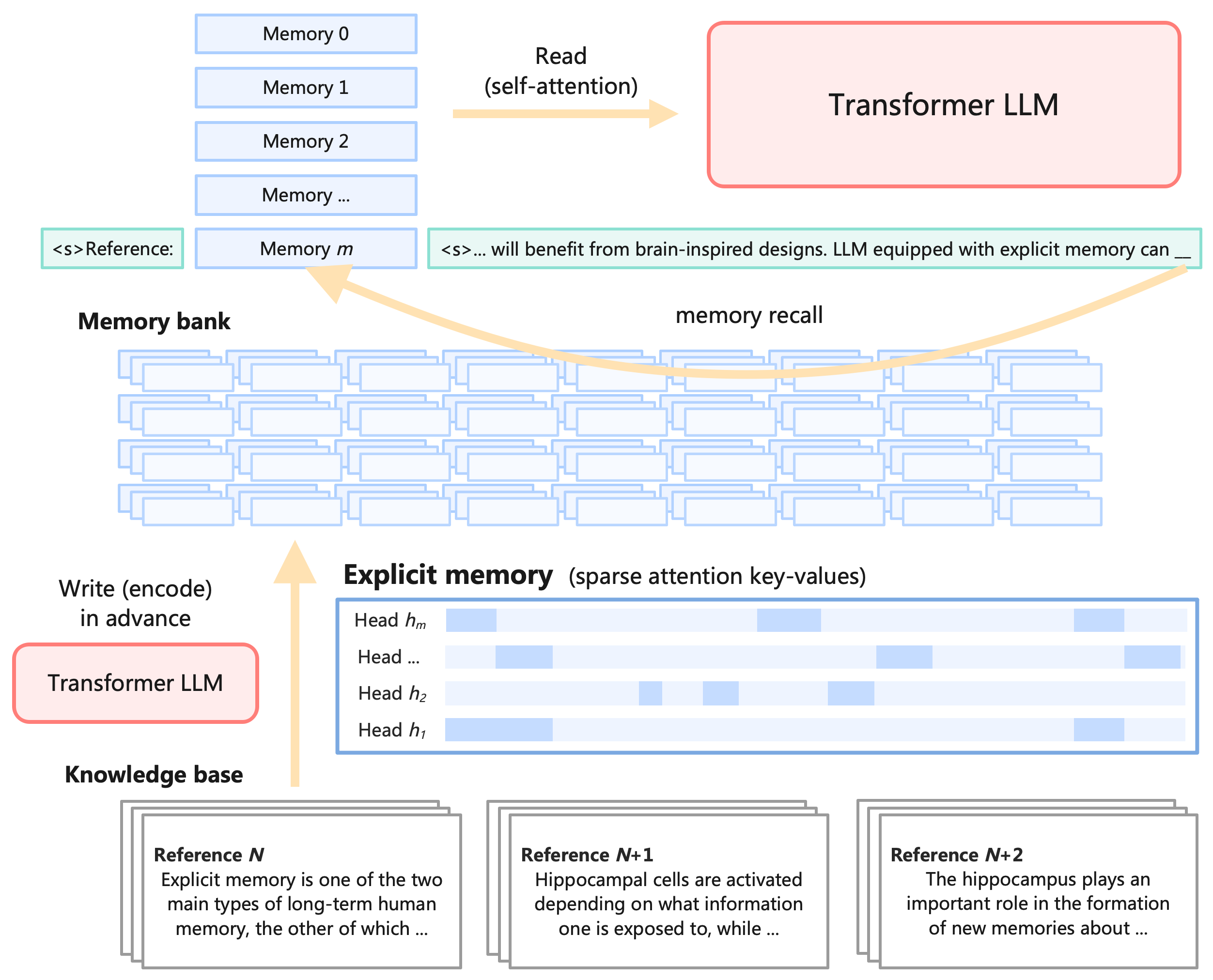}
\caption{The \name{} model converts texts to explicit memories, and then recalls these memories during inference.
The explicit memories can be seen as retrievable model parameters, externalized knowledge, or sparsely-activated neural circuits.}
\label{fig:opening}
\end{figure}

\newpage
\newgeometry{margin=2.5cm}

\begin{figure}[H]
\centering
\subfloat{\includegraphics[width=0.48\textwidth]{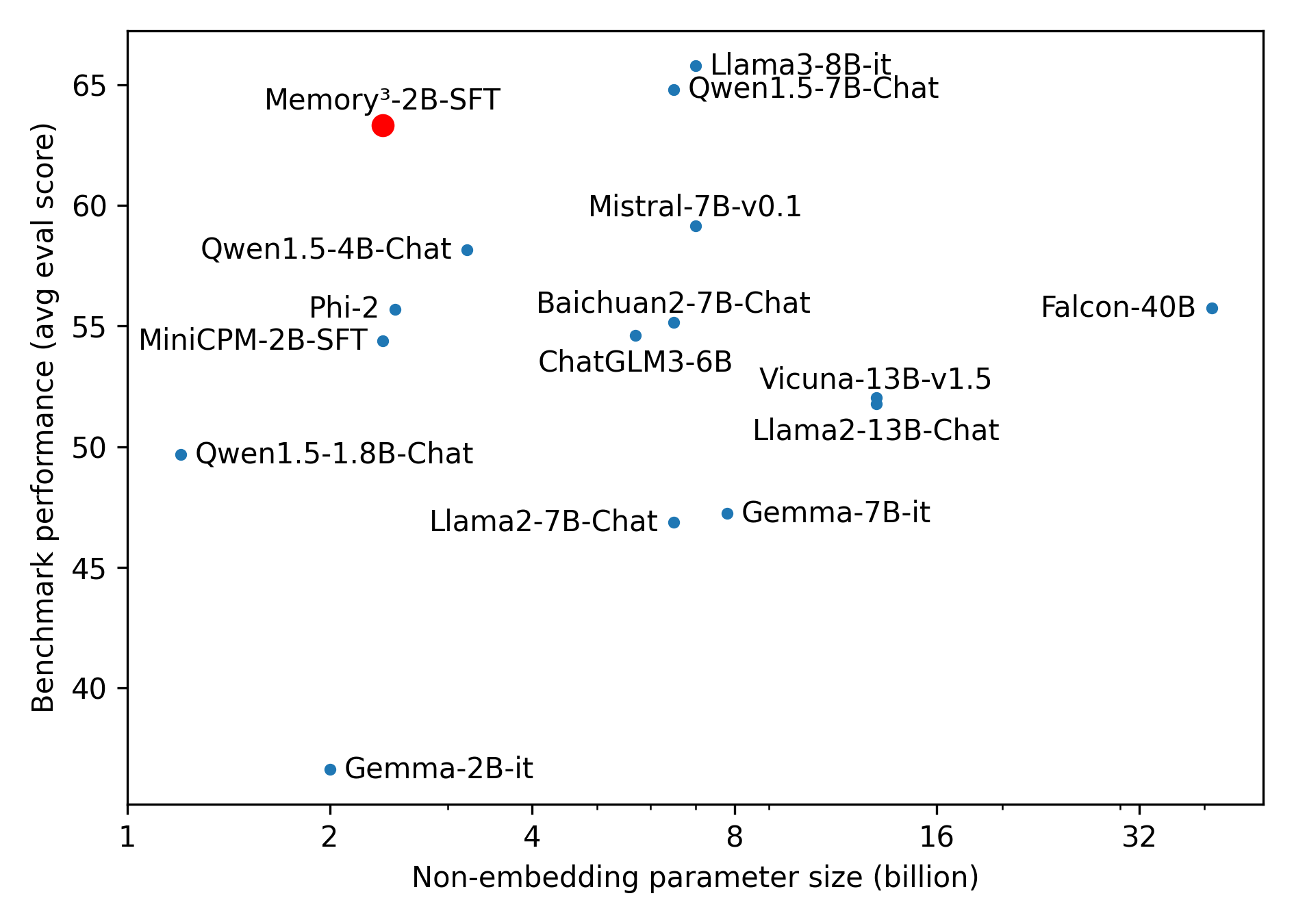}}
\quad
\subfloat{\includegraphics[width=0.48\textwidth]{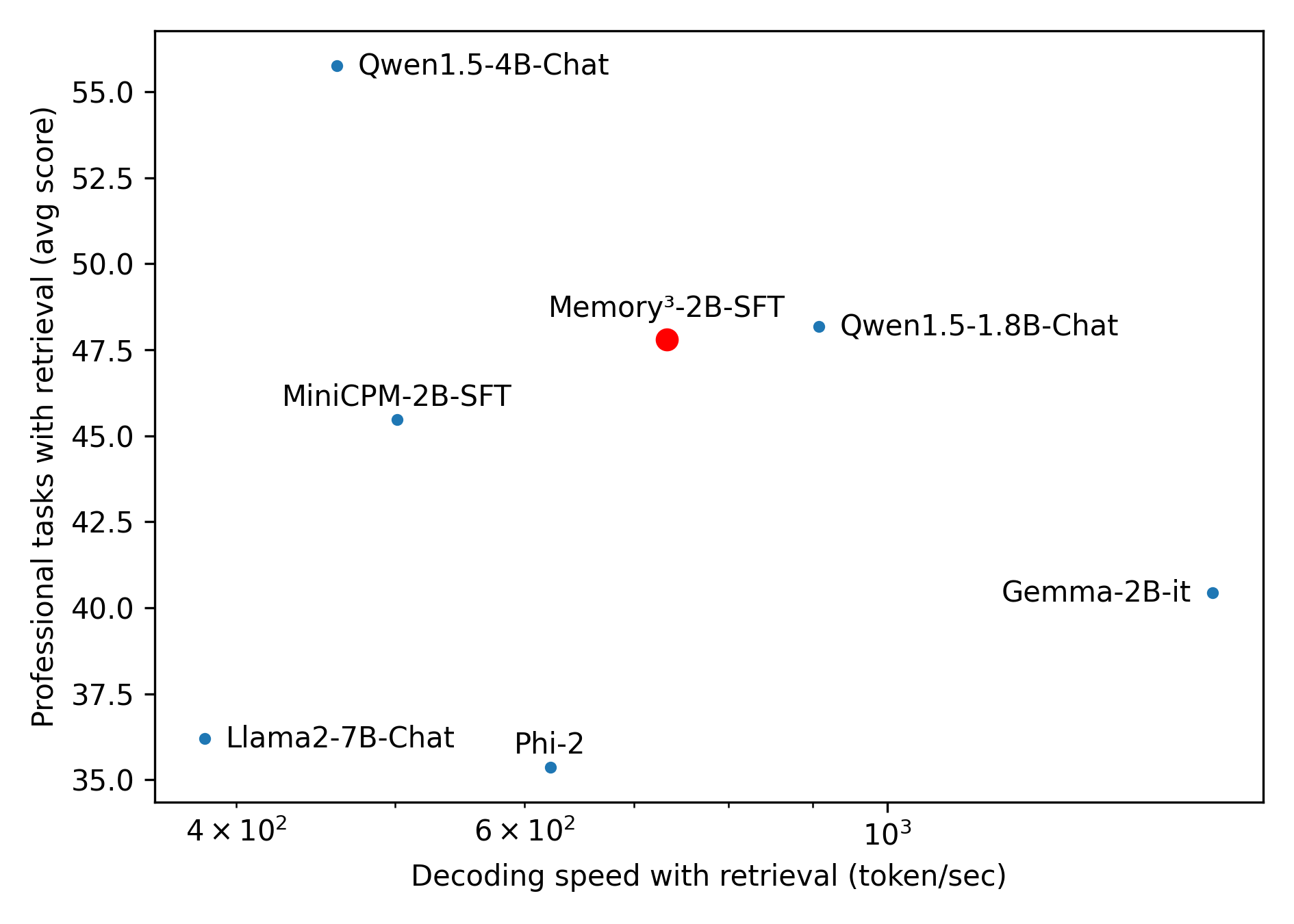}}
\caption{
Left: Performance on benchmarks, with respect to model size (top-left is better).
Right: Retrieval-augmented performance on professional tasks, versus decoding speed with retrieval (top-right is better).
The left plot is based on Table \ref{tab:basic-results}.
The right plot is based on Tables \ref{tab:domain-results} and \ref{tab:throughput}.
\name{} uses high frequency retrieval of explicit memories, while the RAG models use a fixed amount of 5 references.
This is a preliminary experiment and we have not optimized the quality of our pretraining data as well as the efficiency of our inference pipeline, so the results may not be comparable to those of the SOTA models.}
\label{fig:eval plots}
\end{figure}

\section{Introduction}
\label{sec:intro}

Large language models (LLMs) have enjoyed unprecedented popularity in recent years thanks to their extraordinary performance \cite{openai2023gpt4,anthropic2024claude,touvron2023llama,bai2023qwen,yang2023baichuan,abdin2024phi3,jiang2023mistral,hu2024minicpm}.
The prospect of scaling laws \cite{kaplan2020scaling,hoffmann2022training,ruan2024observational} and emergent abilities \cite{wei2022emergent,srivastava2023imitation} constantly drives for substantially larger models, resulting in the rapid increase in the cost of LLM training and inference.
People have been trying to reduce this cost through optimizations in various aspects, including architecture \cite{fedus2022switch,ainslie2023gqa,dai2019transformerxl,liu2023dejavu,peng2023rwkv,sun2024you}, data quality \cite{cerebras2023slimpajama,kaddour2023minipile,gunasekar2023textbooks,li2023textbooks}, operator \cite{dao2022flashattention,kwon2023paged}, parallelization \cite{rasley2020deepspeed,shoeybi2020megatronlm,korthikanti2022reducing,qi2023zero}, optimizer \cite{liu2024sophia,xie2023adan,wang2024IRE}, scaling laws \cite{hoffmann2022training,yang2022tensor}, generalization theory \cite{zhang2024initialization,huang2024unified}, hardware \cite{dey2023cerebrasgpt}, etc.

We introduce the novel approach of optimizing knowledge storage.
The combined cost of LLM training and inference can be seen as the cost of encoding the knowledge from text data into various memory formats, plus the cost of reading from these memories during inference:
\begin{equation}
\label{eq:cost}
\sum_{\text{knowledge }k} \min_{\text{format }m} \text{cost}_{\text{write}}(k, m) + n_k \cdot \text{cost}_{\text{read}}(k, m)
\end{equation}
where $\text{cost}_{\text{write}}$ is the cost of encoding a piece of knowledge $k$ into memory format $m$,
$\text{cost}_{\text{read}}$ is the cost of integrating $k$ from format $m$ into inference,
and $n_k$ is the expected usage count of this knowledge during the lifespan of this LLM (e.g. a few months for each version of ChatGPT \cite{openai2024version,azure2024version}).
The definitions of knowledge and memory in the context of LLMs are provided in Section \ref{sec:theory}, and this paper uses knowledge as a countable noun.
Typical memory formats include model parameters and plain text for retrieval-augmented generative models (RAG); their write functions and read functions are listed in Table \ref{table:memory}, and their $\text{cost}_{\text{write}}$ and $\text{cost}_{\text{read}}$ are provided in Figure \ref{fig:total_cost_usage_2B}.

We introduce a new memory format, explicit memory, characterized by moderately low write cost and read cost.
As depicted in Figure \ref{fig:opening}, our model first converts a knowledge base (or any text dataset) into explicit memories, implemented as sparse attention key-values, and then during inference, recalls these memories and integrates them into the self-attention layers.
Our design is simple so that most of the existing Transformer-based LLMs should be able to accommodate explicit memories with a little finetuning, and thus it is a general-purpose ``model amplifier".
Eventually, it should reduce the cost of pretraining LLMs, since there will be much less knowledge that must be stored in parameters, and thus less training data and smaller model size.

The new memory format enables us to define a memory hierarchy for LLMs:
\begin{center}
plain text (RAG) $\to$ explicit memory $\to$ model parameter
\end{center}
such that by going up the hierarchy, $\text{cost}_{\text{write}}$ increases while $\text{cost}_{\text{read}}$ decreases.
To minimize the cost (\ref{eq:cost}), one should store each piece of knowledge that is very frequently/rarely used in the top/bottom of this hierarchy, and everything in between as explicit memory.
As illustrated in Table \ref{table:memory}, the memory hierarchy of LLMs closely resembles that of humans.
For humans, the explicit/implicit memories are the long-term memories that are acquired and used consciously/unconsciously \cite{kandel2021principles}.

\begin{table}[h]
\centering
\small
\rowcolors{2}{Tue-red!10}{white}
\begin{tabular}{ | c | c | c | c | c |}
\hline
\makecell{Memory format\\of humans} & Example & \makecell{Memory format\\of LLMs} & Write & Read \\
\hline\hline
Implicit memory & common expressions & model parameters & training & matrix multiplication \\
\hline
Explicit memory & books read & this work & memory encoding & self-attention \\
\hline
External information & open-book exam & plain text (RAG) & none & encode from scratch \\
\hline
\end{tabular}
\caption{Analogy of the memory hierarchies of humans and LLMs.}
\label{table:memory}
\end{table}

As a remark, one can compare the plain LLMs to patients with impaired explicit memory, e.g. due to injury to the medial temporal lobe.
These patients are largely unable to learn semantic knowledge (usually stored as explicit memory), but can acquire sensorimotor skills through repetitive priming (stored as implicit memories) \cite{gabrieli1988impaired,corkin2002s,bayley2005failure}.
Thus, one may hypothesize that due to the lack of explicit memory, the training of plain LLMs is as inefficient as repetitive priming, and thus has ample room for improvement.
In analogy with humans, for instance, it is easy to recall and talk about a book we just read, but to recite it as unconsciously as tying shoe laces requires an enormous effort to force this knowledge into our muscle memory.
From this perspective, it is not surprising that LLM training consumes so much data and energy \cite{wu2022sustainable,luccioni2022carbon}.
We want to rescue LLMs from this poor condition by equipping it with an explicit memory mechanism as efficient as that of humans.

\begin{figure}[h]
\centering
\includegraphics[width=0.7\textwidth]{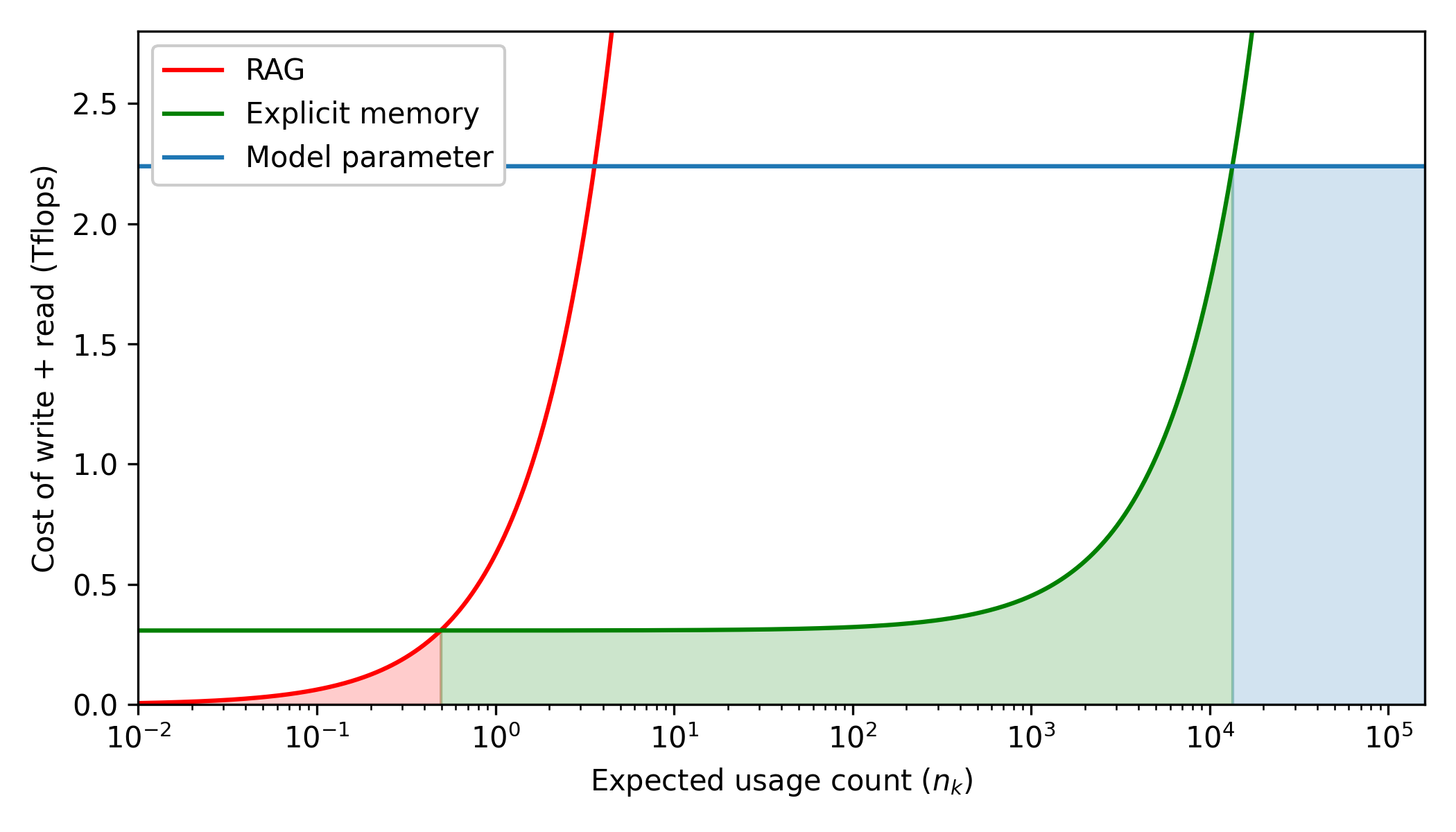}
\caption{The total cost (TFlops) of writing and reading a piece of knowledge by our 2.4B model with respect to its expected usage count.
The curves represent the cost of different memory formats, and the shaded area represents the minimum cost given the optimal format.
The plot indicates that $(0.494, 13400)$ is the advantage interval for explicit memory.
The calculations are provided in Appendix \ref{appendix:cost}.
(The blue curve is only a lower bound on the cost of model parameters.)}
\label{fig:total_cost_usage_2B}
\end{figure}

A quantitative illustration of the cost (\ref{eq:cost}) is given by Figure \ref{fig:total_cost_usage_2B}, where we characterize $\text{cost}_{\text{write}}$ and $\text{cost}_{\text{read}}$ by the amount of compute (TFlops).
The plot indicates that if a piece of knowledge has an expected usage count $\in (0.494, 13400)$, then it is optimal to be stored as an explicit memory.
Moreover, the introduction of explicit memory helps to externalize the knowledge stored in model parameters and thus allow us to use a lighter backbone, which ultimately reduces all the costs in Figure \ref{fig:total_cost_usage_2B}.

\vspace{0.5em}
The second motivation for explicit memory is to alleviate the issue of knowledge traversal.
Knowledge traversal happens when the LLM wastefully invokes all its parameters (and thus all its knowledge) each time it generates a token.
As an analogy, it is unreasonable for humans to recall everything they learned whenever they write a word.
Let us define the knowledge efficiency of an LLM as the ratio of the minimum amount of knowledge sufficient for one decoding step to the amount of knowledge actually used.
An optimistic estimation of knowledge efficiency for a 10B LLM is $10^{-5}$:
On one hand, it is unlikely that generating one token would require more than $10^4$ bits of knowledge (roughly equivalent to a thousand-token long passage, sufficient for enumerating all necessary knowledge);
on the other hand, each parameter is involved in the computation and each stores at least 0.1 bit of knowledge \cite[Result 10]{allenzhu2024physics} (this density could be much higher if the LLM is trained on cleaner data), thus using $10^9$ bits in total.

A novel architecture is needed to boost the knowledge efficiency of LLMs from $10^{-5}$ to $1$, whereas current designs are far from this goal.
Consider the mixture-of-experts architecture (MoE) for instance, which uses multiple MLP layers (experts) in each Transformer block and process each token with only a few MLPs.
The boost of MoE, namely the ratio of the total amount of parameters to the amount of active parameters, is usually bounded by $4\sim 32$ \cite{fedus2022switch,jiang2023mistral,snowflake2024arctic}.
Similarly, neither the mixture-of-depth architecture \cite{elbayad2020depthadaptive,raposo2024mixtureofdepths} nor sparsified MLP neurons and attention heads \cite{liu2023dejavu} can bring greater gains.
RAG appears very sparse if we compare the amount of retrieved texts with the size of the text database; nevertheless, RAG is usually built upon a plain LLM as backbone, which provides most of the knowledge used in inference, and thus offers little assistance in addressing the knowledge traversal problem.

An ideal solution is to retrieve only the needed parameters for each token.
This is naturally achieved by explicit memories if we compare memory recall to parameter retrieval.

\vspace{0.5em}
The third motivation is that, as a human-like design, explicit memory enables LLMs to develop more human-like capabilities.
To name a few,
\begin{itemize}
\item Infinitely long context:
LLMs have the difficulty of processing long texts since their working memory (context key-values) costs too much GPU memory and compute.
Meanwhile, despite that humans have very limited working memory capacity \cite{cowan2001magical,cowan2016working}, they can manage to read and write long texts by converting working memories to explicit memories (thus saving space) and retrieving only the needed explicit memories for inference (thus saving compute).
Similarly, by saving explicit memories on drives and doing frequent and constant-size retrieval, LLMs can handle arbitrarily long contexts with time complexity $O(l\log l)$ instead of $\Theta(l^2)$, where $l$ is the context length.

\item Memory consolidation: Instead of writing a piece of knowledge directly into implicit memory, i.e. training model parameters, LLM can first convert it to explicit memory through plain encoding, and then convert this explicit memory to implicit memory through a low-cost step such as compression and finetuning, thus reducing the overall cost.

\item Factuality and interpretability: Encoding texts as explicit memories is less susceptible to information loss compared to dissolving them in model parameters.
With more factual details provided by explicit memories, the LLMs would have less tendency to hallucinate.
Meanwhile, the correspondence of explicit memories to readable texts makes the inference more transparent to humans, and also allows the LLM to consciously examine its own thought process.
\end{itemize}
We demonstrate the improved factuality in the experiments section, and leave the rest to future work.

\vspace{0.5em}
In this work, we introduce a novel architecture and training scheme for LLM based on explicit memory.
The architecture is called \name{}, as explicit memory is the third form of memory in LLM after working memory (context key-values) and implicit memory (model parameters).
\begin{itemize}
\item \name{} utilizes explicit memories during inference, alleviating the burden of model parameters to memorize specific knowledge.
\item The explicit memories are encoded from our knowledge base, and our sparse memory format maintains a realistic storage size.
\item We trained from scratch a \name{} model with 2.4B non-embedding parameters, and its performance surpasses SOTA models with greater sizes.
It also enjoys better performance and faster inference than RAG.
\item Furthermore, \name{} boosts factuality and alleviates hallucination, and it enables fast adaptation to professional tasks.
\end{itemize}
This paper is structured as follows:
Section \ref{sec:theory} lays the theoretical foundation for \name{}, in particular our definitions of knowledge and memory.
Section \ref{sec:design} discusses the basic design of \name{}, including its architecture and training scheme.
Sections \ref{sec:data}, \ref{sec:pretrain}, and \ref{sec:finetune} describes the training of \name{}.
Section \ref{sec:eval} evaluates the performance of \name{} on general benchmarks and professional tasks.
Finally, Section \ref{sec:conclude} concludes this paper and discusses future works.

\subsection{Related work}

\subsubsection{Retrieval-augmented Training} 
Several language models have incorporated text retrieval from the pretraining stage.
REALM \cite{guu2020REALM} augments a BERT model with one retrieval step to solve QA tasks.
Retro \cite{borgeaud2022retro} enhances auto-regressive decoding with multiple rounds of retrieval, once per 64 tokens.
The retrieved texts are injected through a two-layer encoder and then several cross-attention layers in the decoder.
Retro++ \cite{wangRetro++2023} explores the scalability of Retro by reproducing Retro up to 9.5B parameters.

Meanwhile, several models are adapted to retrieval in the finetuning stage.
WebGPT \cite{nakano2021webgpt} learns to use search engine through imitation learning in a text-based web-browsing environment.
Toolformer \cite{schick2024toolformer} performs decoding with multiple tools including search engine, and the finetuning data is labeled by the LM iself.

The closest model to ours is Retro.
Unlike explicit memory, Retro needs to encode the retrieved texts in real-time during inference.
To alleviate the cost of encoding these references, it chooses to use a separate, shallow encoder and also retrieve few references.
Intuitively, this compromise greatly reduces the amount of knowledge that can be extracted and supplied to inference.



Another line of research utilizes retrieval to aid long-context modeling.
Memorizing Transformer \cite{wu2021memorizing} extends the context of language models by an approximate kNN lookup into a non-differentiable cache of past key-value pairs.
LongLlama \cite{tworkowskiFocusedTransformerLongLlama2023} enhances the discernability of context key-value pairs by a finetuning process inspired by contrastive learning.
LONGMEM \cite{wang2024LONGMEM} designs a decoupled architecture to avoid the memory staleness issue when training the Memorizing Transformer.
These methods are not directly applicable to large knowledge bases since the resulting key-value caches will occupy enormous space.
Our method overcomes this difficulty through a more intense memory sparsification method.


\subsubsection{Sparse Computation} 
To combat the aforementioned knowledge traversal problem and improve knowledge efficiency, ongoing works seek novel architectures that process each token with a minimum and adaptive subset of model parameters.
This adaptive sparsity is also known as contextual sparsity \cite{liu2023dejavu}.
The Mixture-of-Experts (MoE) use sparse routing to assign Transformer submodules to tokens, scaling model capacity without large increases in training or inference costs.
The most common MoE design \cite{fedus2022switch} hosts multiple MLP layers in each Transformer block and routes each token to a few MLPs with the highest allocation score predicted by a linear classifier.
Furthermore, variants based on compression such as QMoE \cite{DBLP:journals/corr/abs-2310-16795} are introduced to alleviate the memory burden of MoE.
Despite the sparse routing, the boost in parameter efficiency is usually bounded by $4\sim32$.
For instance, the Arctic model \cite{snowflake2024arctic}, one of the sparsest MoE LLM in recent years, has an active parameter ratio of about $3.5\%$.
Similarly, the Mixture of Depth architecture processes each token with an adaptive subset of the model layers.
The implementations can be based on early exit \cite{elbayad2020depthadaptive} or top-$k$ routing \cite{raposo2024mixtureofdepths}, reducing the amount of compute to $12.5\sim50\%$.
More fine-grained approaches can perform sparsification at the level of individual MLP neurons and attention heads.
The model Deja Vu \cite{liu2023dejavu} trains a low-cost network for each MLP/attention layer that predicts the relevance of each neuron/head at this layer to each token.
Then, during inference, Deja Vu keeps the top $5\sim15\%$ MLP neurons and $20\sim50\%$ attention heads for each token.

\subsubsection{Parameter as memory} 

Several works have portrayed model parameters as implicit memory, in accordance with our philosophy.
\cite{geva2021kv} demonstrates that the neurons in the MLP layers of GPTs behave like key-value pairs.
Specifically, with the MLP layer written as $\sigma(XK^T)V$, each row of the first layer weight $K_i$ functions like a key vector, with the corresponding row in the second layer weight $V_i$ being the value vector.
\cite{geva2021kv} observes that for most of the MLP neurons, the $K_i$ is activated by context texts that obey some human interpretable pattern, and the $V_i$ activates the column of the output matrix that corresponds to the most probable next token of the pattern (e.g. $n$-gram).
Based on this observation, \cite{sukhbaatar2019augmenting} designs a GPT variant that consists of only attention layers, with performance matching that of the usual GPTs.
The MLP layers are incorporated into the attention layers in the form of key-value vector pairs, which are called persistent memories.
Similarly, using sensitivity analysis, \cite{dai2021knowledge} discovers that factual knowledge learned by BERT is often localized at one or few MLP neurons.
These neurons are called ``knowledge neurons", and by manipulating them, \cite{dai2021knowledge} manages to update single pieces of knowledge of BERT.
Meanwhile, \cite{elhage2022superposition} studies an interesting phenomenon known as superposition or polysemanticity, that a neural network can store many unrelated concepts into a single neuron.

\section{Memory Circuitry Theory}
\label{sec:theory}

This section introduces our memory circuitry theory, which defines knowledge and memory in the context of LLM.
We will see that this theory helps to determine which knowledge can be stored as explicit memory, and what kind of model architecture is suitable for reading and writing explicit memories.
For readers interested primarily in the results, it may suffice to review Claim \ref{claim:separable} and Remark \ref{remark:induction_to_architecture} before proceeding to the subsequent sections.
The concepts to be discussed are illustrated in Figure \ref{fig:abstract_specific_knowledge}.

\begin{figure}[h]
\centering
\includegraphics[width=0.66\textwidth]{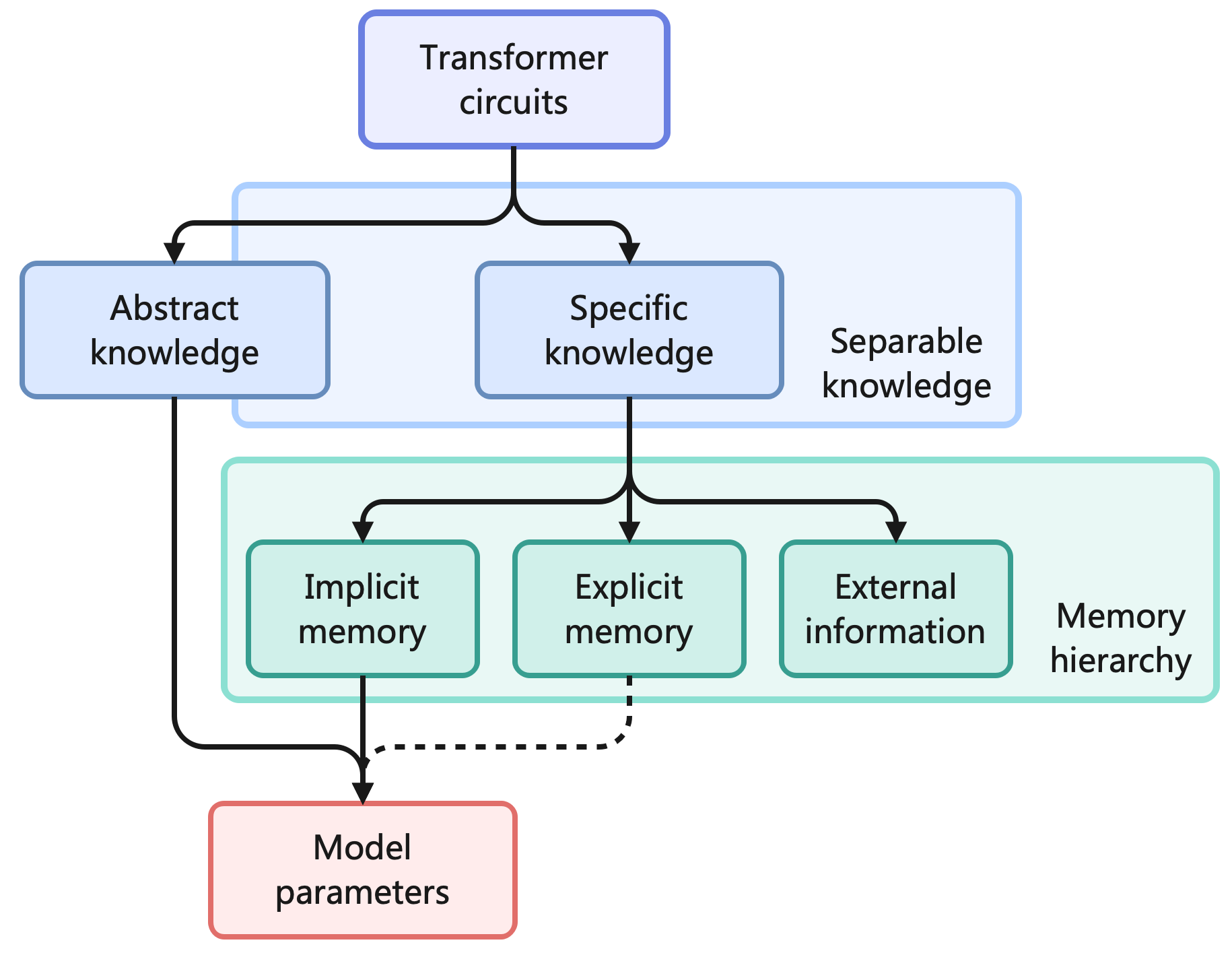}
\caption{Categorization of knowledge and memory formats.
The explicit memories, extracted from model activations, lie half-way between raw data and model parameters, so we use a dotted line to indicate that they may or may not be regarded as parameters.}
\label{fig:abstract_specific_knowledge}
\end{figure}

\subsection{Preliminaries}

The objective is to decompose the computations of a LLM into smaller, recurring parts, and analyze which parts can be separated from the LLM.
These small parts will be defined as the ``knowledge" of the LLM, and this characterization helps to identify what knowledge can be externalized as explicit memory, enabling both the memory hierarchy and a lightweight backbone.

One behaviorist approach is to define the smaller parts as input-output relations between small subsequences, such that if the input text contains a subsequence belonging to some pattern, then the output text of the LLM contains a subsequence that belongs to some corresponding pattern.
\begin{itemize}
\item One specific input-output relation is that if the immediate context contains ``China" and ``capital", then output the token ``Beijing".
\item One abstract input-output relation is that if the immediate context is some arithmetic expression (e.g. ``$123\times456=$") then output the answer (e.g. ``$56088$").
\item One abstract relation that will be mentioned frequently is the ``search, copy and paste" \cite{olsson2022incontext}, such that if the context has the form ``\dots[a][b]\dots[a]" then output ``[b]", where [a] and [b] are arbitrary tokens.
\end{itemize}
A decomposition into these relations seems natural since autoregressive LLMs can be seen as upgraded versions of $n$-grams, with the fixed input/output segments generalized to flexible patterns and with the plain lookup table generalized to multi-step computations.

Nevertheless, a behaviorist approach is insufficient since an input-output relation alone cannot uniquely pin down a piece of knowledge:
a LLM may answer correctly to arithmetic questions based on either the actual knowledge of arithmetic or memorization (hosting a lookup table for all expressions such as ``$123\times456=56088$").
Therefore, we take a white-box approach that includes in the definition the internal computations of the LLM that convert these inputs to the related outputs.

Here are two preliminary examples of internal computations.


\begin{example}
\label{ex:capital-prelim}
Several works have studied the underlying mechanisms when LLMs answer to the prompt ``The capital of China is" with ``Beijing", as well as other factual questions \cite{dai2021knowledge,geva2021kv,lv2024interpreting,chughtai2024summing}.
At least two mechanisms are involved, and the LLM may use their superposition \cite{lv2024interpreting}.
One mechanism is to use general-purpose attention heads (called ``mover heads") to move ``capital" and ``China" to the last token ``is", and then use the MLP layers to map the feature of the last token to ``Beijing" \cite{lv2024interpreting}.
Often, only one or a few MLP neurons are causally relevant, and they are called ``knowledge neurons" \cite{dai2021knowledge}.
This mechanism is illustrated in Figure \ref{fig:classical_circuits} (left).
Another mechanism involves attention heads $h$ whose value-to-output matrices $W^h_V W^h_O$ function like bigrams,
e.g. mapping ``captial" to \{``Paris", ``Beijing", \dots\} and ``China" to \{``panda", ``Beijing", \dots\} , which sum up to produce ``Beijing" \cite{chughtai2024summing,geva2021kv,lv2024interpreting}.
This mechanism is illustrated in Figure \ref{fig:classical_circuits} (middle).
\end{example}

\begin{example}
\label{ex:induction-prelim}
The ability of LLMs to perform ``search, copy and paste", namely answering to the context ``\dots[a][b]\dots[a]" with ``[b]", is based on two attention heads, together called induction heads \cite{olsson2022incontext}.
The first head copies the feature of the previous token, enabling [b] to ``dress like" its previous token [a].
The second head searches for similar features, enabling the second [a] to attend to [b], which now has the appearance of [a].
Thereby, the last token [a] manages to retrieve the feature of [b] and to output [b].
This mechanism is illustrated in Figure \ref{fig:classical_circuits} (right).
A similar mechanism is found for in-context learning \cite{wang2023label}.
\end{example}

\begin{figure}[h]
\centering
\includegraphics[width=0.85\textwidth]{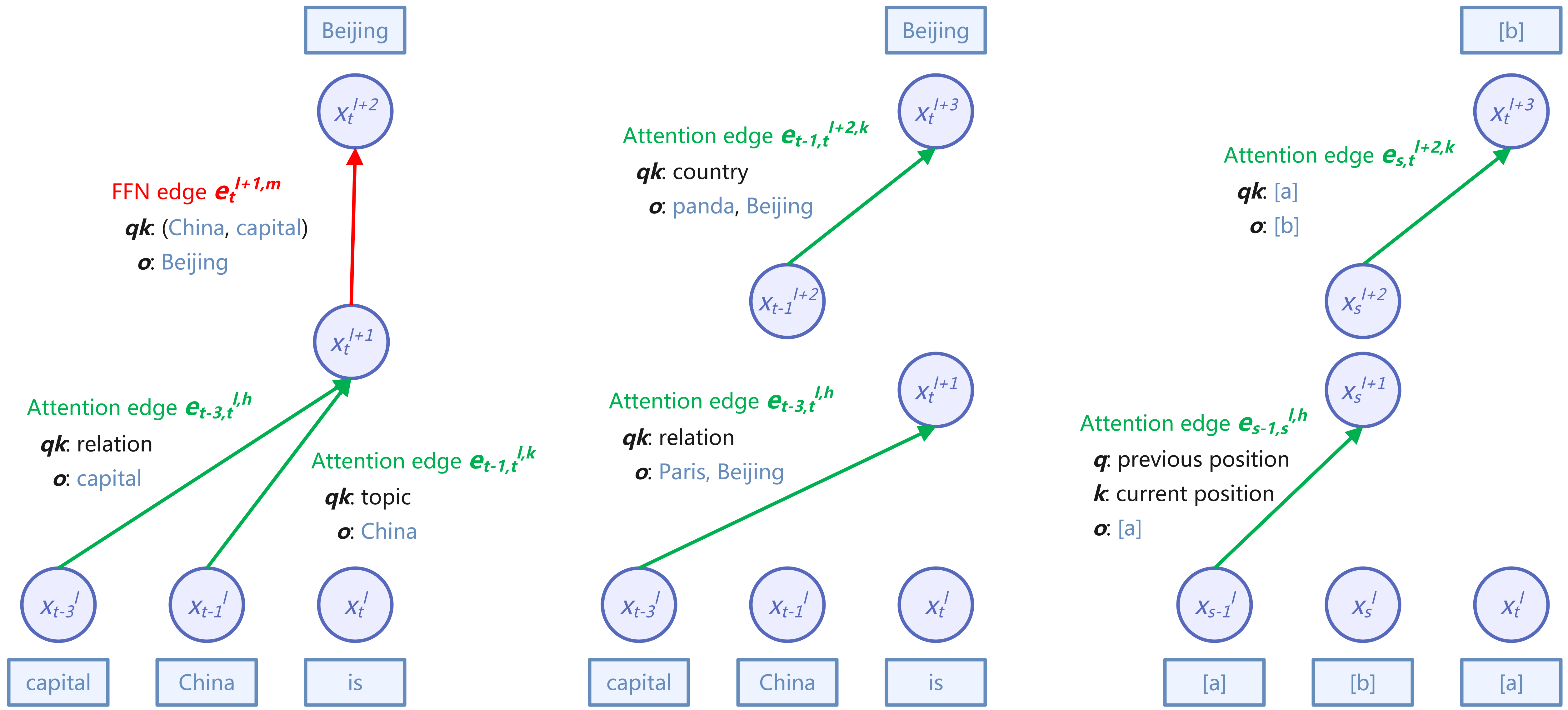}
\caption{Illustration of three subgraphs.
Left: A subgraph that inputs ``the capital of China is" and outputs ``Beijing".
The knowledge neuron is marked in red and the mover heads in green.
Middle: Another subgraph with similar function using task-specific heads.
Right: The induction-heads subgraph that inputs ``[a][b]...[a]" and outputs [b], where [a], [b] are arbitrary tokens.
The notations are introduced in Section \ref{sec:knowledge}.
The locations of these attention heads and MLP neurons may be variable.}
\label{fig:classical_circuits}
\end{figure}

We will address the internal mechanism for an input-output relation as a circuit, and will define a piece of knowledge as an input-output relation plus its circuit.
By manipulating these circuits, one can separate many pieces of knowledge from a LLM while keeping its function intact.

Recent works on circuit discovery demonstrate that some knowledge and skills possessed by Transformer LLMs can be identified with patterns in their computation graphs \cite{olsson2022incontext,wang2023label,stolfo2023mechanistic,geva2023dissecting,wang2022interpretability,conmy2023towards,dai2021knowledge,geva2021kv}, but there has not been a universally accepted definition of circuit.
Different from works on Boolean circuits \cite{hao2022formal,merrill2023logic} and circuits with Transformer submodules as their nodes \cite{conmy2023towards,yao2024knowledge}, we characterize a circuit as a ``spatial-temporal" phenomenon, whose causal structure is localized at the right places (MLP neurons and attention heads) and right times (tokens).
Thus, we define a computation graph as a directed acyclic graph, whose nodes are the hidden features of all tokens at all all MLP and attention layers, and whose edges correspond to all activations inside these layers.
In particular, the computation graph hosts one copy of the Transformer architecture at each time step.
To transcend this phenomenological characterization, we define a circuit as an equivalence class of similar subgraphs across multiple computation graphs.

As a remark, it is conceptually feasible to identify a circuit with the minimal subset of Transformer parameters that causes this circuit.
The benefit is that such definition of knowledge seems more intrinsic to the LLM.
Nevertheless, with the current definition, it is easier to perform surgery on the circuits and derive constructive proofs.
Besides, it is known that Transformer submodules exhibit superposition or polysemanticity, such that one MLP neuron or attention head may serve multiple distinct functions \cite{elhage2022superposition,lv2024interpreting}, making the identification of parameter subsets a challenge task.

\subsection{Knowledge}
\label{sec:knowledge}

We begin with the definition of the knowledge of LLMs.
For now, it suffices to adopt heuristic definitions instead of fully rigorous ones.
Throughout this section, by LLM we mean autoregressive Transformer LLM that has at least been pretrained.
Let $L$ be the number of Transformer blocks and $H$ be the number of attention heads at each attention layer, and the blocks and heads are numbered by $l=0, \dots L-1$ and $h=0,\dots H-1$.
There are in total $2L$ layers (MLP layers and attention layers), and the input features to these layers are numbered by $0, \dots 2L-1$.

\begin{definition}
Given an LLM and a text $\mathbf{t}=(t_0,\dots t_n)$, the \textbf{computation graph} $G$ on input $(t_0, \dots t_{n-1})$ and target $(t_1, \dots t_n)$ is a directed graph with weighted edges such that
\begin{itemize}
\item Its nodes consist of the hidden vectors $\x_i^{2l}$ before all attention layers, the hidden vectors $\x_i^{2l+1}$ before all MLP layers, and the output vectors $\x_i^{2L}$, for all blocks $l=0, \dots L-1$ and positions $i=0, \dots n-1$.
\item Its directed edges consist of each attention edge $e^{l,h}_{i,j}$ that goes from $\x_i^{2l}$ to $\x_j^{2l+1}$ at the $h$-th head of the $l$-th attention layer for all $l,h$ and $i\leq j$, as well as each MLP edge $e^{l,m}_i$ that goes from $\x_i^{2l+1}$ to $\x_i^{2l+2}$ through the $m$-th neuron of the $l$-th MLP layer for all $l,m,i$.

\item The weight of each attention edge $e^{l,h}_{i,j}$, which measures the influence of the attention score $a^{l,h}_{i,j}$ on the LLM output, is defined by
\begin{equation*}
\mathcal{L}-\mathcal{L}\big|_{a^{l,h}_{i,j}=0} \quad \text{or} \quad \frac{\partial \mathcal{L}}{\partial a^{l,h}_{i,j}}
\end{equation*}
where $\mathcal{L}$ is the log-likelihood of the target $(t_1, \dots t_n)$, with $\mathcal{L}|_{a=0}$ obtained by setting $a=0$ (i.e. causal intervention).
Similarly, the weight of each MLP edge $e^{l,m}_i$, which measures the influence of the neuron activation $a^{l,m}_i$ on the LLM output, is defined likewise.

\item Given any subgraph $S \subseteq G$, define the \textbf{associated input} of $S$ as a subsequence $\mathbf{t}_{\text{in}}(S) \subseteq (t_0, \dots t_{n-1})$ such that a token $t_i$ belongs to $\mathbf{t}_{\text{in}}(S)$ if and only if $\big\|\nabla_{\x_i^0} a\big\|$ is large for some attention edge (or MLP edge) in $S$ with attention score (or activation) $a$.

\item Similarly, define the \textbf{associated output} of the subgraph $S$ as a subsequence $\mathbf{t}_{\text{out}}(S) \subseteq (t_1, \dots t_n)$ such that a token $t_i$ belongs to $\mathbf{t}_{\text{out}}(S)$ if and only if
\begin{equation*}
\mathcal{L}_i-\mathcal{L}_i\big|_{a=0} \quad \text{or} \quad \frac{\partial \mathcal{L}_i}{\partial a}
\end{equation*}
is large for some attention edge (or MLP edge) in $S$ with attention score (or activation) $a$.
Here $\mathcal{L}_i$ is the log-likelihood of $t_i$ with respect to the LLM output.
\end{itemize}
\end{definition}

\begin{figure}[h]
\centering
\subfloat{\includegraphics[width=0.255\textwidth]{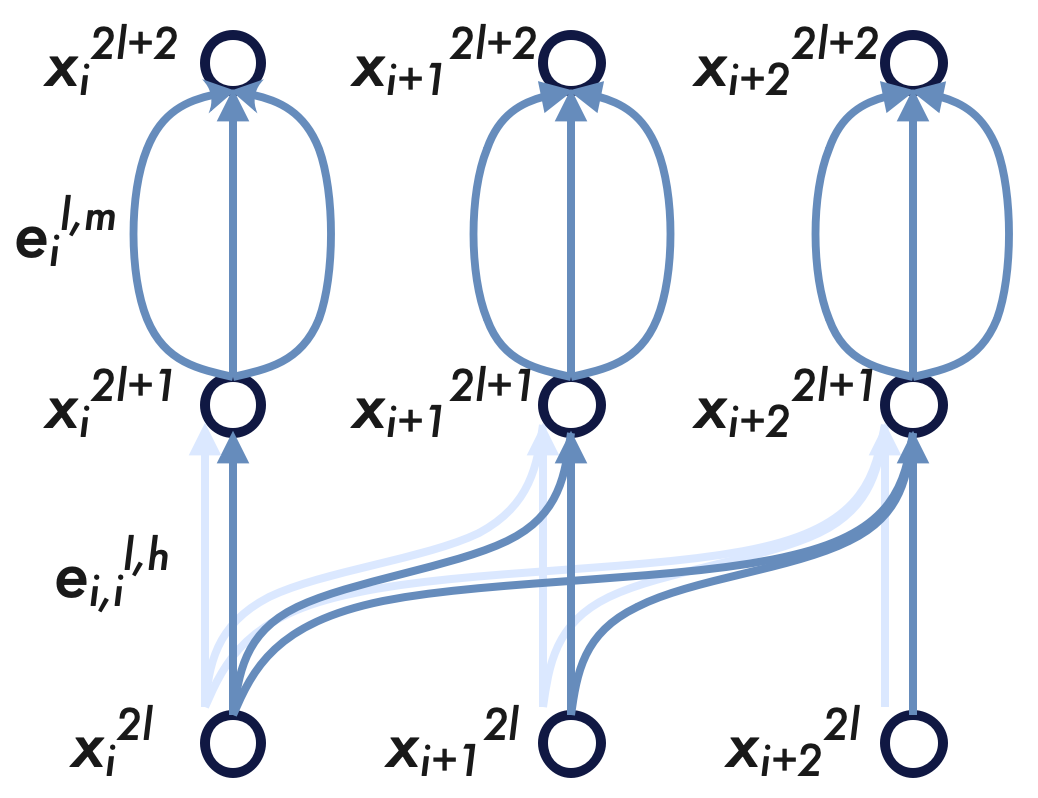}}
\quad
\subfloat{\includegraphics[width=0.72\textwidth]{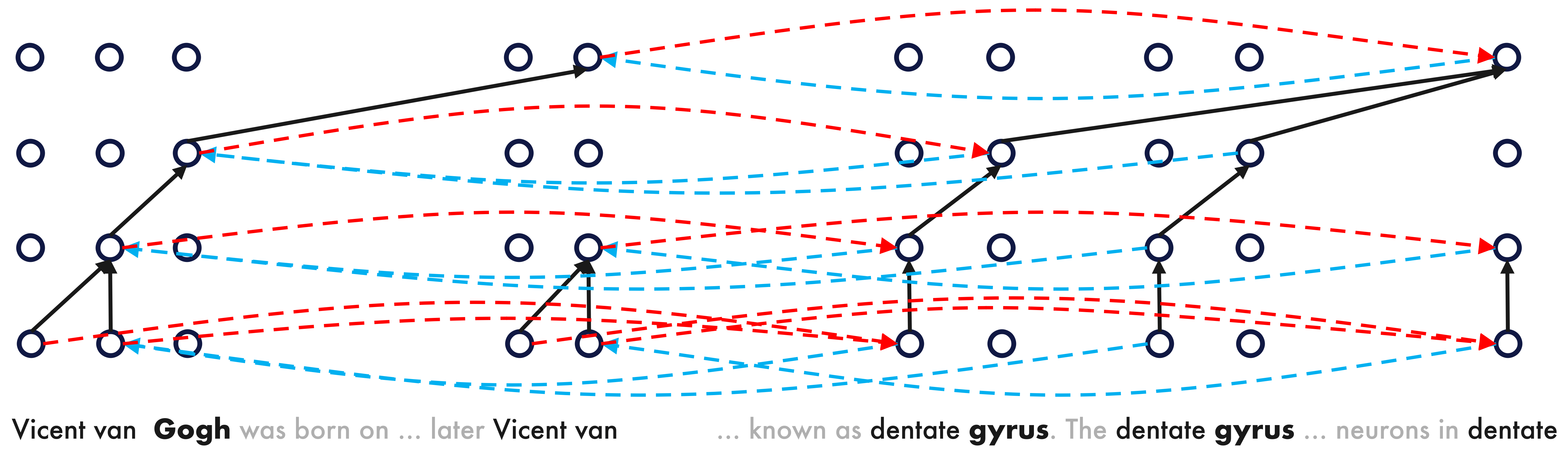}}
\caption{Left: Illustration of the computation graph over one Transformer block, showing only three tokens, one attention head and three MLP neurons.
The edge weights are not shown.
Right: The subgraphs $S_1, S_2$, namely the induced subgraphs of the attention edges (black arrows), belong to the circuit of the induction head.
The red arrows denote a homomorphism from $S_1$ to $S_2$, and the blue arrows denote a homomorphism from $S_2$ to $S_1$.}
\label{fig:circuit}
\end{figure}

\begin{definition}
Given two computation graphs $G_1, G_2$ of an LLM and their subgraphs $S_1, S_2$, a mapping $f$ from the nodes of $S_1$ to the nodes of $S_2$ (not necessarily injective) is a \textbf{homomorphism} if
\begin{itemize}
\item every node at depth $l \in \{0, \dots 2L\}$ is mapped to depth $l$,
\item if two nodes are on the same position $i$, then they are mapped onto the same position,
\item if two nodes share an edge on attention head $h$ or MLP neuron $m$, then their images also share an edge on head $h$ or neuron $m$.
\end{itemize}
If such a homomorphism exists, then we say that $S_1$ is homomorphic to $S_2$.
\end{definition}

It may be more convenient to define the mapping to be between the input tokens of two sentences, but we adopt the current formulation as it is applicable to more general settings without an obvious correspondence between the tokens and the hidden features at each layer.

\begin{definition}
\label{def:circuit}
Given an LLM and a distribution of texts, a \textbf{circuit} is an equivalence class $\K$ of subgraphs from computation graphs on random texts, such that
\begin{itemize}
\item The computation graph on a random text contains some subgraph $S\in\K$ with positive probability.
\item All subgraphs $S\in\K$ are homomorphic to each other.
\item All edges of all $S\in\K$ have non-negligible weights.
\item The pairs $(\mathbf{t}_{\text{in}}(S), \mathbf{t}_{\text{out}}(S))$ share some interpretable meaning across all $S \in \K$.
\end{itemize}
\end{definition}


\begin{definition}
\label{def:specific}
Given an LLM and a distribution of texts, we call each circuit a \textbf{knowledge}.
Furthermore, a circuit $\K$ is called a
\begin{itemize}
\item \textbf{specific knowledge}, if the associated inputs $t_{\text{in}}(S)$ for all subgraphs $S \in\K$ share some interpretable meaning, and the associated outputs $t_{\text{out}}(S)$ for all $S\in\K$ are the same or differ by at most a small fraction of tokens.
\item \textbf{abstract knowledge}, else.
\end{itemize}
\end{definition}

From now on, we use knowledge as a countable noun since the circuits are countable.
Note that the criterion in Definition \ref{def:specific} is stronger than the last criterion in Definition \ref{def:circuit}, e.g. consider the circuit that always copy-and-pastes the previous token.
We will see that the rigidity of specific knowledges makes them easier to externalize.

Here are some well-known examples of knowledge.

\begin{example}
Recall the knowledge neuron from Example \ref{ex:capital-prelim} that helps to answer ``The capital of China is Beijing".
Such neurons can be activated by a variety of contexts that involve the subject-relation pair (``China", ``capital") \cite{dai2021knowledge}.
Its circuit can be simply defined as the equivalence class of subgraphs induced by edges $e^{l,m}_i$, where $(l,m)$ is the fixed location of the knowledge neuron and $i$ is the variable position of the last token of the context.
The associated inputs are ``China" and ``capital", and the associated outputs are always ``Beijing".
By definition, this circuit is a specific knowledge, since its associated output is fixed and its associated inputs share a clear pattern (fixed tokens with variable positions).
\end{example}
Similarly, by straightforward construction, one can show that each $n$-gram can be expressed as a specific knowledge.

\begin{example}
\label{ex:induction}
Recall the induction heads \cite{olsson2022incontext} from Example \ref{ex:induction-prelim} that complete ``[a][b] \dots [a]" with ``[b]".
Let $(l,h), (l+1,h')$ be the locations of these two heads, and denote the variable positions of the two token [a]'s by $i,j$.
Its circuit is the equivalence class of subgraphs induced by the two edges $e^{l,h}_{i,i+1}, e^{l+1,h'}_{i+1,j}$.
Although the associated input-output pairs ``[a][b]\dots[a][b]" have a clear pattern, the associated outputs ``[b]" alone can be arbitrary, so the induction head is an abstract knowledge.
\end{example}
More sophisticated abstract knowledges have been identified for in-context learning \cite{wang2023label} and indirect object identification \cite{wang2022interpretability}.

\begin{definition}
\label{def:realization}
Given a LLM and a knowledge $\K$, a text $\mathbf{t}=(t_0, \dots t_n)$ is called a \textbf{realization} of $\K$, if the computation graph on $\mathbf{t}$ has a subgraph that belongs to $\K$.
\end{definition}

For instance, any text of the form [a][b]\dots[a][b] can be a realization of the abstract knowledge of induction head.

Our definition of knowledge is extrinsic, depending on a specific LLM, instead of intrinsic, depending only on texts.
From this perspective, Problem (\ref{eq:cost}) can be interpreted as relocating the knowledges from an all-encompassing LLM to more efficient models equipped with memory hierarchy.
For concreteness, one can fix this reference LLM to be the latest version of ChatGPT or Claude \cite{openai2023gpt4,anthropic2024claude}, or some infinitely large model from a properly defined limit that has learned from infinite data.

\begin{assumption}[Completeness]
\label{assume:complete}
Fix a reference LLM and a distribution of texts, let $G$ be the computation graph of a random text.
Assume that there exists a set $\mathfrak{K}$ of knowledges such that, with probability 1 over the random text,
the subgraph of $G$ induced by edges with non-negligible weights can be expressed as a union of subgraphs $\{S_i \in \K_i\}$ from $\{\K_i\}\subseteq\mathfrak{K}$.
\end{assumption}

Essentially, Assumption \ref{assume:complete} posits that all computations in the LLM can be fully decomposed into circuits, so that the LLM is nothing more than a collection of specific and abstract knowledges.
This viewpoint underscores that the efficiency of LLMs is ultimately about the effective organization of these knowledges, an objective partially addressed by Problem (1).

\subsection{Memory}
\label{sec:memory}

Now the question is what knowledge can be separated from the model parameters and moved to the lower levels of the memory hierarchy.

\begin{definition}
\label{def:separable}
A knowledge $\K$ of the reference LLM is \textbf{separable} if there exists another LLM $M$ such that
\begin{itemize}
\item $M$ does not possess this knowledge, such that for any realization $\mathbf{t}$ of $\K$, the model $M$ cannot generate each token of the associated output $\mathbf{t}_{\text{out}}$ with high probability, e.g. $\prob_M(t_i|t_0\dots t_{i-1}) \leq 1/2$ for some $t_i \in \mathbf{t}_{\text{out}}$.
\item There exists a text $\mathbf{t}_*$ such that for any realization $\mathbf{t}$ of $\K$,
the model $M$ using $\mathbf{t}_*$ as prefix can generate each token of the associated output $\mathbf{t}_{\text{out}}$ with high probability,
e.g. $\prob_M(t_i|\mathbf{t}_* t_0\dots t_{i-1}) \geq 0.9$ for every $t_i \in \mathbf{t}_{\text{out}}$.
\end{itemize}
If among the realizations of $\K$, the same associated input $\mathbf{t}_{\text{in}}$ can correspond to multiple associated outputs $\mathbf{t}_{\text{out}}$, then the above probabilities are summed over all branches if position $i$ is a branching point.
\end{definition}

\begin{definition}
\label{def:imitable}
A separable knowledge $\K$ of the reference LLM is \textbf{imitable} if any realization $\mathbf{t}'$ of $\K$ can be used as the prefix $\mathbf{t}_*$ in Definition \ref{def:separable},
e.g. for any realizations $\mathbf{t},\mathbf{t}'$ of $\K$, we have $\prob_M(t_i|\mathbf{t}' t_0\dots t_{i-1}) \geq 0.9$ for every $t_i \in \mathbf{t}_{\text{out}}$.
\end{definition}

Basically, imitability means that LLMs can achieve the same effect as possessing this knowledge by retrieving example texts that demonstrate this knowledge.
Few-shot prompting can be seen as a special case of providing realizations.

Separability is a more general property than imitability.
For instance, one can set the prefix $\mathbf{t}_*$ to be an abstract description of $\K$ instead of its realization, and this is reminiscent of instruction prompting.
Nevertheless, it is not obvious whether the set of separable knowledges is strictly larger than the set of imitable knowledges.

\begin{claim}
\label{claim:separable}
Every specific knowledge $\K$ is imitable and thus is separable.
\end{claim}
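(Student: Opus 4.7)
The plan is to establish imitability by a two-step construction: first build a witness model $M$ by surgically ablating the circuit of $\K$ from the reference LLM, then show that the residual abstract circuits of $M$ (in particular induction-head-style pattern completion) suffice to reproduce the nearly-fixed output $\mathbf{t}_{\text{out}}$ of $\K$ whenever any realization $\mathbf{t}'$ is supplied as prefix. Separability then follows immediately since imitability is a special case.

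First I would define $M$ as the reference LLM with the causal contribution of $\K$ zeroed out, e.g.\ clamping to zero every attention score $a^{l,h}_{i,j}$ or neuron activation $a^{l,m}_i$ whose edge participates in a subgraph $S \in \K$. By the third bullet of Definition \ref{def:circuit}, these edges carry non-negligible weight, so on any realization $\mathbf{t}$ the log-likelihood drop on at least one output token $t_i \in \mathbf{t}_{\text{out}}$ is substantial. With a modest calibration argument (using the ``$\mathcal{L} - \mathcal{L}|_{a=0}$'' edge-weight convention already adopted for the computation graph), one gets $\prob_M(t_i \mid t_0\dots t_{i-1}) \leq 1/2$ for some $t_i \in \mathbf{t}_{\text{out}}$, verifying the first bullet of Definition \ref{def:separable}.

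Second I would exploit the specificity of $\K$. Because all associated outputs agree up to a small fraction of tokens and all associated inputs share a common interpretable pattern, concatenating an arbitrary realization $\mathbf{t}'$ in front of $\mathbf{t}$ yields a prompt in which the target completion has been explicitly demonstrated. I would then argue that $M$ still possesses the abstract knowledges identified in Example \ref{ex:induction} and in the in-context learning literature (\cite{olsson2022incontext,wang2023label}): these are separate circuits, localized at different attention heads and MLP neurons, and are untouched by the surgery on $\K$. Applied to the concatenated prompt, they perform the ``search, copy, and paste'' routine: the recurrence of the interpretable input pattern at the end of $\mathbf{t}$ causes the induction heads to attend to the matching position inside $\mathbf{t}'$ and copy the output tokens. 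Since specific knowledge has an essentially fixed output, one demonstration is enough, and the $0.9$ threshold in Definition \ref{def:imitable} comfortably absorbs the small fraction of tolerated output deviations.

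The main obstacle will be making the second step rigorous within the paper's heuristic framework, since it depends on two background hypotheses: (i) the reference LLM is capable enough that its abstract circuits perform one-shot completion for any interpretable input pattern that could host a specific knowledge, and (ii) these abstract circuits are circuit-disjoint from $\K$ so that ablating $\K$'s edges does not disable them. Hypothesis (ii) is natural because $\K$ is itself a single equivalence class of subgraphs, while induction and in-context-learning circuits live on different heads as documented in Examples \ref{ex:capital-prelim}--\ref{ex:induction-prelim}; hypothesis (i) is the substantive assumption, and I would state it explicitly as a property of the chosen reference LLM (consistent with Assumption \ref{assume:complete}) rather than attempt to derive it. With these in place, the construction above gives the prefix $\mathbf{t}_* = \mathbf{t}'$ required by Definition \ref{def:imitable}, proving imitability and hence separability.
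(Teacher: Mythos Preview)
Your proposal follows essentially the same route as the paper's informal proof: remove $\K$ from the reference model to obtain $M$, then argue that an induction-head-style abstract circuit in $M$ can complete $\mathbf{t}$ once a realization $\mathbf{t}'$ is prepended, exploiting that specific knowledges have nearly-fixed outputs and inputs sharing an interpretable pattern.

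Two differences are worth flagging. First, the paper constructs $M$ by \emph{finetuning} the reference model on a modified data distribution (where the associated input of $\K$ is never followed by its associated output), whereas you construct $M$ by surgically clamping the edges of $\K$ to zero. Your ablation is input-dependent as stated (the edges $e^{l,h}_{i,j}$ live in a computation graph for a particular input), so $M$ is not obviously a fixed model; and even if you lift the ablation to the parameter level via the homomorphism structure, the superposition/polysemanticity phenomenon the paper cites means zeroing those heads or neurons could damage the very abstract circuits you rely on in step two. The paper's finetuning sidesteps this, at the cost of being equally heuristic. Second, the paper opens with a WLOG reduction (splitting $\K$ so that all input tokens precede all output tokens) that you omit; it is minor but needed for the copy-and-paste picture to apply cleanly. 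Your explicit acknowledgment of hypotheses (i) and (ii) is a fair rendering of what the paper leaves implicit.
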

\begin{proof}[Proof (informal)]
Without loss of generality, we can assume that for any realization $\mathbf{t}$ of $\K$, all tokens of the associated input $\mathbf{t}_{\text{in}}$ precede all tokens of the associated output $\mathbf{t}_{\text{out}}$.
Otherwise, we can split $\mathbf{t}_{\text{in}}$ into two halves $\mathbf{t}_1, \mathbf{t}_2$ that precedes/does not precede $\mathbf{t}_{\text{out}}$, and split the corresponding subgraph $S\in\K$ into two halves $S_1,S_2$ that have high weights with respect to $\mathbf{t}_1, \mathbf{t}_2$.
Using monotonicity arguments once Definition \ref{def:circuit} is fully formalized, one can try to show that this splitting is invariant across $S\in\K$ and therefore the sets of $S_1, S_2$ are two specific knowledges.

Consider sequences of the form [a][b]\dots[a'][b'], where [a], [a'] (or [b], [b']) could be the associated inputs (or outputs) of any subgraphs $S,S'\in\K$.
By Definition \ref{def:specific}, [a] and [a'] always share some interpretable meaning, while [b] and [b'] are approximately the same sequence.
One can construct an abstract knowledge that completes [a][b]\dots[a'] with [b']:
the first part of this circuit detects the common feature of the [a]'s (possibly overlapping with the subgraphs of $\K$), the second part is an induction head (analogous to Example \ref{ex:induction}, it provides [b] with the common feature of the [a]'s and lets [a'] to attend to [b]), and the third part generates [b'] based on [b] with possible slight modifications.
This circuit is an abstract knowledge since it can be applied to other specific knowledges as long as their associated inputs share the same meaning with the [a]'s, no matter how their associated outputs could vary.

Meanwhile, construct the model $M$ by letting the reference model forget $\K$ (e.g. by finetuning on a modified data distribution such that the associated input of $\K$ is never followed by the associated output, while the rest of the distribution remains the same).
Combining this circuit with $M$ completes the proof.
\end{proof}

Claim \ref{claim:separable} indicates that a lot of knowledges can be externalized from the model parameters.
The converse of Claim \ref{claim:separable} may not hold, since it is imaginable that some abstract knowledges can also be substituted with their realizations.

\begin{remark}
\label{remark:induction_to_architecture}
There are three details in the proof of Claim \ref{claim:separable} that will be useful later
\begin{enumerate}
\item The circuit we construct has only one attention head that attends to the reference text $\mathbf{t}'$ from the present text $\mathbf{t}$, while all other computations are confined within either $\mathbf{t}$ or $\mathbf{t'}$.
\item Moreover, in this attention head, the circuit only needs the edges from [b] to [a'].
Thus, in general this head only needs to attend to very few tokens in the reference.
\item It suffices for the reference $\mathbf{t}'$ to attend only to itself.
\end{enumerate}
These properties will guide our architecture design.
\end{remark}

To finish the set-up of Problem (\ref{eq:cost}), we define the memory formats.
The definition should subsume the aforementioned formats of model parameters, explicit memories and plain texts for RAG, and also allow for new memory formats of future LLMs.

\begin{definition}
Let $\mathfrak{K}$ be the complete set of knowledges from Assumption \ref{assume:complete} and consider the subset of separable knowledges.
Let $\mathfrak{T}$ be a set that contains one or several realizations $\mathbf{t}$ for each separable knowledge.
Let $f_1, \dots f_m$ be any functions over $\mathfrak{T}$.
Abstractly speaking, a memory-augmented LLM $M$ is some mapping from prefixes to token distributions with additional inputs
\begin{equation}
\label{eq:read}
M: \big( (t_0 \dots t_{i-1}), \{\K_1, \dots \K_N\}, X_1, \dots X_m \big) \mapsto \prob(\cdot|t_0 \dots t_{i-1})
\end{equation}
where the set $\{\K_1, \dots \K_N\}$ consists of non-separable knowledges of $M$ that are invoked at this step, and the sets $X_j$ consist of encoded texts
\begin{equation}
\label{eq:write}
X_j = \big\{ f_j(\mathbf{t}_{j,k}) \big\}
\end{equation}
for some $\mathbf{t}_{j,k} \in \mathfrak{T}$.

Each $j=1,\dots m$ represents a memory format and $f_j$ is called the \texttt{write} function of this format.
If some realization of a separable knowledge $\K$ participates in the mapping $M$, then we say that $\K$ is \texttt{written} in format $j$ and \texttt{read} by $M$.
\end{definition}

Analogous to Assumption \ref{assume:complete}, we are decomposing each step of LLM inference into the invoked circuits, but the decomposition here also involves reference texts that are written in various memory formats.

Table \ref{table:memory} demonstrates that the write functions could be diverse, and the list is probably far from conclusive.
Nevertheless, some heuristics still apply.
The write function $f_j$ and the read process in $M$ for each format $j$ should be non-trivial such that, for any separable knowledge $\K$ not contained in $M$ and any realization $\mathbf{t}$ of $\K$, if $\K$ enters in $M$ through format $j$, then $M$ should be able to generate each token of the associated output of $\K$ in $\mathbf{t}$ with higher probability as in Definition \ref{def:separable}.
Thus, informally speaking, the total cost of writing and reading $\K$ must be bounded from 0, since some minimum computation is necessary for reducing the uncertainty in generating the correct tokens.
It follows that the write cost and read cost are complementary, i.e. cheaper writing must be accompanied by more expensive reading.

We define this inverse relationship between the write cost and read cost as the memory hierarchy.
This relationship is in accordance with our experience regarding the three examples of human memories in Table \ref{table:memory}, e.g. we can utter the common expressions almost immediately while it may take a few seconds to recall a book we read, but the former skill is acquired through years of language speaking.
For the LLM memories in Table \ref{table:memory}, the inverse relationship is illustrated Figure \ref{fig:total_cost_usage_2B} and established by the calculations in Appendix \ref{appendix:cost}.

\begin{figure}[h]
\centering
\includegraphics[width=0.75\textwidth]{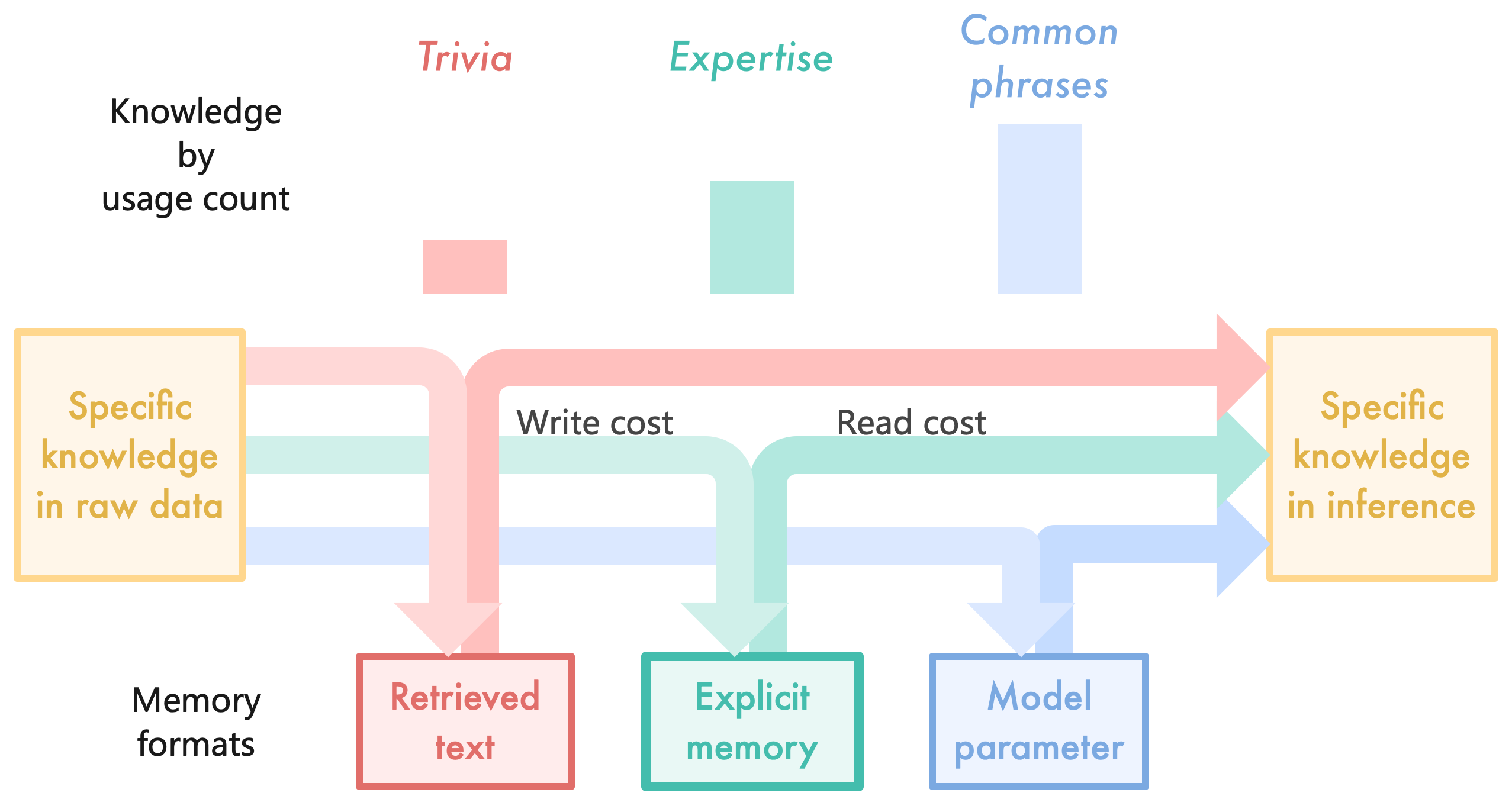}
\caption{Different memory formats with different balances of write cost and read cost.
The specific knowledges with high to low usage counts are exemplified by common expressions, expertise and trivia, and are assigned to implicit memory, explicit memory and external information.}
\label{fig:knowledge_distribution}
\end{figure}

The imbalanced use of knowledges leads to a heterogeneous distribution of knowledges across the memory hierarchy.
To minimize the total cost (\ref{eq:cost}), the separable knowledges that are used more often should be assigned to memory formats with high write cost and low read cost, whereas the rarely used knowledges should be assigned to formats with low write cost and high read cost.
Also, adding a new memory format $m+1$ is always beneficial as it expands the search space and decreases the minimum cost whenever the usage count of some knowledge $\K$ lies in the interval
\begin{equation*}
[n^-_{m+1}, n^+_{m+1}] = \big\{ n \in [0,\infty) ~\big|~ \text{argmin}_j ~\text{cost}_{\text{write}}(\K, j) + n \cdot \text{cost}_{\text{read}}(\K, j) = m+1 \big\}
\end{equation*}
Examples of these intervals are displayed in Figure \ref{fig:total_cost_usage_2B}.
For concreteness, Figure \ref{fig:knowledge_distribution} depicts a reasonable distribution of the specific knowledges for humans, and we expect a similar distribution to hold for LLMs equipped with explicit memory.


\section{Design}
\label{sec:design}

This section describes the architecture and training scheme of \name{}.

Regarding architecture, the goal is to design an explicit memory mechanism for Transformer LLMs with moderately low write cost and read cost.
In addition, we want to limit the modification to the Transformer architecture to be as little as possible, adding no new trainable parameters, so that most of the existing Transformer LLMs can be converted to \name{} models with little finetuning.
Thus, we arrive at a simple design:
\begin{itemize}
\item Write cost:
Before inference, the LLM writes each reference to an explicit memory, saved on drives.
The memory is selected from the key-value vectors of the self-attention layers, so the write process involves no training.
Each reference is processed independently, avoiding the cost of long-context attention.
\item Read cost:
During inference, explicit memories are retrieved from drives and read by self-attention alongside the usual context key-values.
Each memory consists of very few key-values from a small amount of attention heads, thus greatly reducing the extra compute, GPU storage, drive storage and loading time.
It allows the LLM to retrieve many references frequently with limited influence on decoding speed.
\end{itemize}

Regarding training, the goal is to reduce the cost of pretraining with a more efficient distribution of knowledge.
Based on the discussion in Section \ref{sec:memory}, we want to encourage the LLM to learn only abstract knowledges, with the specific knowledges mostly externalized to the explicit memory bank.
Ideally, the pretraining cost should be reduced to be proportional to the small amount of knowledge stored in the model parameters, thereby taking a step closer to the learning efficiency of humans.

\subsection{Inference Process}
\label{sec:inference}

From now on, we refer to the realizations of separable knowledges (Definitions \ref{def:realization} and \ref{def:separable}) as references.
Our knowledge base (or reference dataset) consists of $1.1 \times 10^8$ text chunks with length bounded by 128 tokens.
Its composition is described in Section \ref{sec:ref}.

Each reference can be converted to an explicit memory, which is a tensor with shape
\begin{center}
(memory layers, 2, key-value heads, sparse tokens, head dimension) = $(22, 2, 8, 8, 80)$
\end{center}
The 2 stands for the key and value, while the other numbers are introduced later.

Before inference, the \name{} model converts all references to explicit memories and save them on drives or non-volatile storage devices.
Then, at inference time, whenever (the id of) a reference is retrieved, its explicit memory is loaded from drives and sent to GPU to be integrated into the computation of \name{}.
By Remark \ref{remark:induction_to_architecture}, a reference during encoding does not need to attend to any other texts (e.g. other references or query texts), so it is fine to encode each reference independently prior to inference.
Such isolation also helps to reduce the compute of attention.

One can also employ a ``cold start" approach to bypass preparation time: each reference is converted to explicit memory upon its initial retrieval, rather than prior to inference.
Subsequent retrievals will then access this stored memory.
The aforementioned inference with precomputed explicit memories will be called ``warm start".

\begin{figure}[h]
\centering
\includegraphics[scale=0.3]{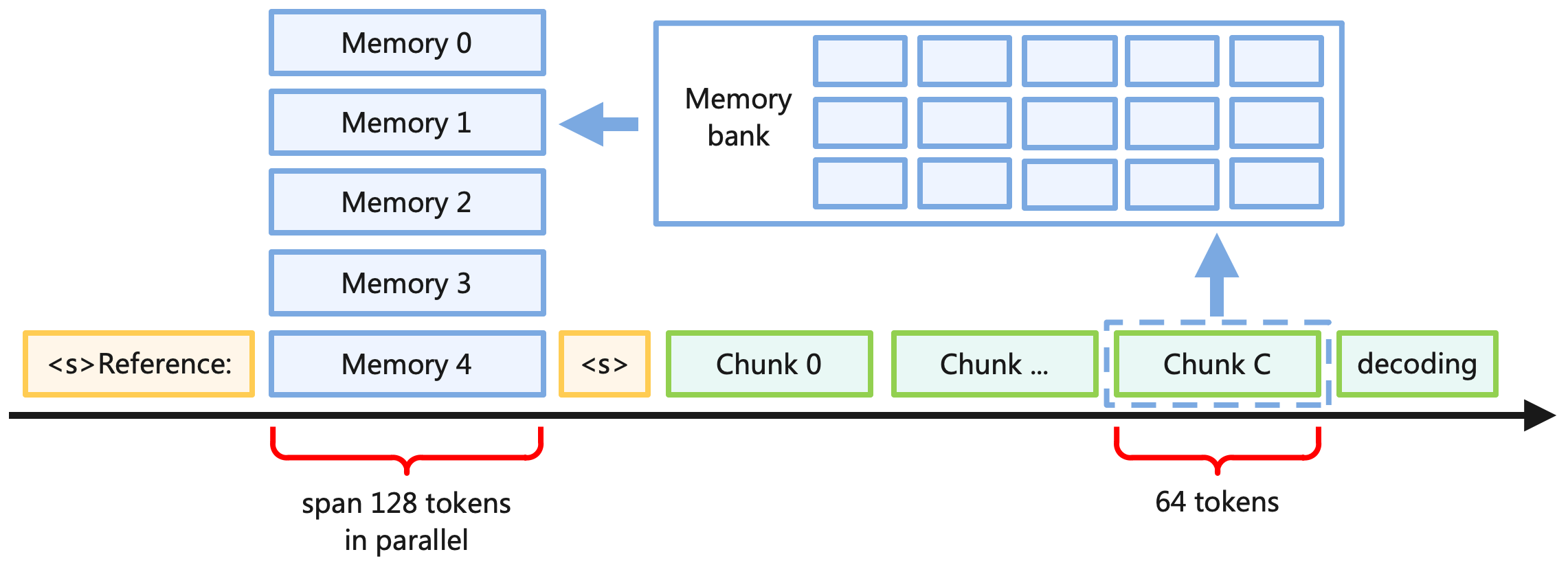}
\caption{The decoding process of \name{} with memory recall.
Each chunk is a fixed-length interval of tokens, which may belong to either the prompt or generated text.}
\label{fig:inference}
\end{figure}

During inference, as illustrated in Figure \ref{fig:inference}, whenever the LLM generates 64 tokens, it discards the current memories, uses these 64 tokens as query text to retrieve 5 new memories, and continues decoding with these memories.
Similarly, when processing the prompt, the LLM retrieves 5 memories for each chunk of 64 tokens.
Each chunk attends to its own memories, and the memories could be different across chunks.
We leave it to future work to optimize these hyperparameters.

The retrieval is performed with plain vector search with cosine similarity.
The references as well as the query chunks are embedded by BGE-M3, a multilingual BERT model \cite{chen2024bge}.
The query and key vectors for retrieval are both obtained from the output feature of the $\langle\text{cls}\rangle$ token.
The vector index is built with FAISS \cite{douze2024faiss}.

To further save time, we maintain a fixed-size cache in RAM to store the most recently used explicit memories.
It's been observed that adjacent chunks often retrieve some of the same references.
So the cache reduces the cost of loading explicit memories from drives.

\begin{remark}
\label{remark:self-retrieval}
It would be ideal to perform retrieval using the hidden features from the LLM itself,
since conceptually the LLM should know its needs better than any external module,
and such internalized retrieval appears more anthropomorphic.
Moreover, retrieving with the hidden features from different layers, different heads and different keywords can help to obtain more diverse results.
One simple implementation is to use the sparsified attention queries of the query text to directly search for the explicit memories.
Since the explicit memories are the attention key-values, such retrieval can work without the need to finetune the LLM.
Specifically, this multi-vector retrieval can follow the routine of \cite{khattab2020colbert} with the additional constraint that a query from attention head $h$ can only search for keys from $h$,
while the sparse attention queries can be obtain using the same selection mechanism for explicit memories described later.
\end{remark}

\begin{remark}
\label{remark:context}
One shortcoming of RAG is that the references are usually text chunks instead of whole documents, and thus during inference the references are encoded without their contexts, making them less comprehensible.
This shortcoming can be easily overcome for explicit memories.
One solution is to encode each document as one sequence, then chunk the attention key-values into 128-token chunks and sparsify them into explicit memories.
This procedure allows the key-values to attend to all their contexts.
\end{remark}

\subsection{Writing and Reading Memory}

Each explicit memory is a subset of the attention key-values from a subset of attention heads when encoding a reference.
Thus, during inference, the LLM can directly read the retrieved explicit memories through its self-attention layers by concatenating them with the usual context key-values (Figure \ref{fig:inference}).
Specially, for each attention head $h$ at layer $l$, if it is chosen as a memory head, then its output $Y^{l,h}$ changes from the usual
\begin{equation*}
Y^{l,h}_i = \text{softmax}\Big(\frac{ X^{l,h}_i W^{l,h}_Q \big(X^{l,h}_{[:i]} W^{l,h}_K\big)^T}{\sqrt{d_h}}\Big) X^{l,h}_{[:i]} W^{l,h}_V W^{l,h}_O
\end{equation*}
where $X_{[:i]}$ denotes all tokens before or at position $i$ and $d_h$ denotes the head dimension, to
\begin{equation}
\label{eq:memory attention}
Y^{l,h}_i = \text{softmax}\Big(\frac{ X^{l,h}_i W^{l,h}_Q \cdot \text{concat}\big(K^{l,h}_0, \dots K^{l,h}_4, X^{l,h}_{[:i]} W^{l,h}_K\big)^T}{\sqrt{d_h}}\Big) \text{concat}\big( V^{l,h}_0, \dots V^{l,h}_4, X^{l,h}_{[:i]} W^{l,h}_V \big) W^{l,h}_O
\end{equation}
where each $(K_j,V_j)$ denotes the keys and values of an explicit memory.

While the context BOS token is $\langle\text{s}\rangle$ as usual,
when encoding each reference we modify the BOS to ``$\langle\text{s}\rangle$Reference:" to help the LLM distinguish between encoding normal texts and encoding references.
This modified BOS is also prepended to the context during inference, as illustrated in Figure \ref{fig:inference}, while the context BOS token now serves as a separator between the references and context.
Unlike the explicit memories which only appear at a subset of attention heads, this modified BOS is placed at every head at every layer.
The motivation is that since the context BOS can attend to the references, its feature is no longer constant, so the LLM needs the modified BOS to serve as the new constant for all attention heads.

Furthermore, we adopt parallel position encoding for all explicit memories, namely the positions of all their keys lie in the same interval of length 128, as depicted in Figure \ref{fig:inference}.
We use the rotary position encoding (RoPE) \cite{su2024roformer}.
The token sparsification is applied after RoPE processes the attention keys, so the selected tokens retain their relative positions in the references.
Besides flexibility, one motivation for parallel position is to avoid the ``lost in the middle" phenomenon \cite{liu2023lost}, such that if the references are positioned serially, then the ones in the middle are likely to be ignored.
Similarly, token sparsification also helps to alleviate this issue by making the attention more focused on the important tokens.
We note that designs analogous to the parallel position have been used to improve in-context learning \cite{ratner2022parallel} and long-context modeling \cite{bertsch2024unlimiformer}.

\subsection{Memory Sparsification and Storage}
\label{sec:sparse memory}

One of the greatest challenges for explicit memories is that the attention key-values occupy too much space.
They not only demand more disk space, which could be costly, but also occupy GPU memory during inference, which could harm the batch size and thus the throughput of LLM generation.
An intense compression is needed to save space.
The full attention key tensor (or value tensor) for each reference has shape (layers, key-value heads, tokens, head dimension), so we compress all four dimensions.

Regard layers, we only set the first half of the attention layers to be memory layers, i.e. layers that produce and attend to explicit memories (\ref{eq:memory attention}), while the second half remain as the usual attention layers.
Note that Remark \ref{remark:induction_to_architecture} suggests that it is usually the attention heads in the middle of the LLM that attend to the references.
So it seems that appointing the middle attention layers (e.g. the ones within the $25\%$ to $75\%$ depth range) to be memory layers is a more sensible choice.
This heuristic is supported by the observations in \cite{wu2024retrieval,fang2024unimem} that the attention to the distant context usually takes place in the middle layers.

Regarding heads, we set all key-value heads at each memory layer to be memory heads.
We reduce their amount by grouped query attention (GQA) \cite{ainslie2023gqa}, letting each key-value head be shared by multiple query heads, and obtain 20\% sparsity (8 versus 40 heads).
It is worth mentioning that, besides GQA and memory layers, another approach is to select a small subset of heads that are most helpful for reading memories, and this selection does not have to be uniform across layer.
We describe several methods for selecting memory heads in Remark \ref{remark:head}.

Regarding tokens, we select 8 tokens out of 128 for each key-value head.
We choose a high level of sparsity, since Remark \ref{remark:induction_to_architecture} indicates that the attention from the context to the references are expected to be concentrated on very few tokens.
Note that the selected tokens are in general different among heads, so in principle their union could cover a lot of tokens.
For each head $h$ at layer $l$, the selection uses top-8 over the attention weight
\begin{equation*}
w^{l,h}_j = \sum_{i=0}^{127} \tilde{a}^{l,h}_{i,j}, \quad \tilde{a}^{l,h}_{i,j} = \text{softmax}_j\Big(\frac{X^{l,h}_i W^{l,h}_Q (X^{l,h}_j W^{l,h}_K )^T }{\sqrt{d_h}}\Big)
\end{equation*}
which measures the importance of a token by the attention received from all tokens.
The BOS tokens and paddings do not participate in the the computation of the weights.
These attention weights $\tilde{a}$ are different from the usual ones, such that there is no causal mask or position encoding involved.
The consideration is that since the explicit memories are prepared before any inference, the selection can only depend on the reference itself instead of any context texts.
The removal of causal mask and position encoding ensures that tokens at any position has an equal chance to receive attention from others.
To speed up computation, we adopt the following approximate weights in our implementation, although in retrospect this speedup is not necessary.
\begin{equation*}
w^{l,h}_j = \sum_{i=0}^{127} \exp\Big(\frac{X^{l,h}_i W^{l,h}_Q (X^{l,h}_j W^{l,h}_K )^T }{\sqrt{d_h}}\Big)
\end{equation*}
Similar designs that sparsify tokens based on attention weights have been adopted in long-context modeling to save space \cite{liu2023scissorhands,zhang2024h2o}.

Regarding head dimension, we optionally use a vector quantizer to compress each of the key and value vectors using residual quantizations \cite{chen2010approximate} built with FAISS \cite{douze2024faiss}.
The compression rate is $80/7 \approx 11.4$.
During inference, the retrieved memories are first loaded from drives, and then decompressed by the vector quantizer before being sent to GPU.
The evaluations in Section \ref{sec:eval-general} indicate that this compression has negligible influence on the performance of \name{}.
More details can be found in Appendix \ref{appendix:quantizer}.

Hence, the total sparsity is 160 or 1830 (without or with vector compression).
Originally, the explicit memory bank would have an enormous size of 7.17PB or equivalently 7340TB (given the model shape described in Section \ref{sec:model shape} and saved in bfloat16).
Our compression brings it down to 45.9TB or 4.02TB (without or with vector compression), both acceptable for the drive storage of a GPU cluster.

To deploy the \name{} model on end-side devices such as smart phones and laptops, one can place the explicit memory bank and the vector index on a cloud server, while the devices only need to store the model parameters and the decoder of the vector quantizer.
During inference, to perform retrieval, the model on the end-side device sends the query vector to the cloud server, which then searches the index and returns the compressed memories.
The speed test of this deployment is recorded in Section \ref{sec:speed}.

\begin{remark}
\label{remark:head}
If one wants to finetune a pretrained LLM into a \name{} model, there are several ways to select a small but effective subset of attention heads (among all heads at all layers) for memory heads (\ref{eq:memory attention}).
Methods such as \cite{wu2024retrieval,fang2024unimem} are proposed to identify the heads that contribute the most to long-context modeling by retrieving useful information from distant tokens, and usually these special heads account for only $<10\%$ of the total heads.
Here we also propose a simple method for selecting memory heads:
Given the validation subsets of a representative collection of evaluation tasks, one can measure the average performance $s_h$ for a modified version of the LLM for each attention head $h$.
The modification masks the distant tokens for head $h$ so it can only see the preceding 100 tokens and the BOS token.
Then, it is reasonable to expect that $s_h$ would be markedly low for a small subset of heads $h$, indicating that they are specialized for long-range attention.
\end{remark}

\begin{remark}
\label{remark:adaptive head}
Actually, Remark \ref{remark:induction_to_architecture} suggests that each reference only needs to be attended to by just one attention head, although in general this special head may be different among the references.
Thus, it seems a promising approach to apply adaptive sparsity not only to token selection, but also to the memory heads, namely each reference is routed to one or two heads (analogously to MoE), and its explicit memory is produced and read by these heads.
Such design if feasible can further boost the sparsity of explicit memory and save much more space.
\end{remark}


\subsection{Model Shape}
\label{sec:model shape}

As discussed in Section \ref{sec:memory}, the specific knowledges can be externalized to explicit memories, and thus to minimize the total cost (\ref{eq:cost}), the model parameters (or implicit memory) only need to store abstract knowledges and the subset of specific knowledges that are frequently used.
The shape of our model, i.e. (the number of Transformer blocks $L$, heads $H$, head dimension $d_h$, width of the MLP layers $W$), is chosen to accommodate this desired knowledge distribution.
Informally speaking, given a fixed parameter size $P$, the shape maximizes the following objective
\begin{equation}
\label{eq:shape objective}
\max_{L, H, d_h, W} \Big\{ \frac{\text{capacity for abstract knowledge}}{\text{capacity for specific knowledge}} ~\Big|~ \text{size}(L, H, d_h, W) \approx P \Big\}
\end{equation}
Here we set $P$ to be 2.4 billion.

Some recent works suggest that the capacities for learning specific knowledges and abstract knowledges are subject to different constraints.
On one hand, \cite{dai2021knowledge} observes that the amount of bits of trivia information (such as a person's name, date of birth and job title) that a LLM can store depends only on its parameter size.
Regardless of $L$ and $H$, the max capacity is always around 2 bits per parameter.

On the other hand, \cite{weiss2021thinking} trains Transformers to learn simple algorithms such as reversing a list and counting the occurrence of each letter.
It is observed that for several such tasks, there exists a minimum $L_0$ and $H_0$ such that a Transformer with $L\geq L_0$ and $H \geq H_0$ can learn the task with perfect accuracy, whereas the accuracy drops significantly for Transformers with either $L=L_0-1$ or $H=H_0-1$ (given that either $L_0$ or $H_0\geq 2$).
This sharp transition supports the view that the layers and heads of Transformer LLMs can be compared to algorithmic steps, and tasks with a certain level of complexity require at least a certain amount of steps.
It is worth mentioning that the emergent phenomenon \cite{wei2022emergent,srivastava2023imitation} of LLMs can also be explained by this view and thus adds support to it, although it may not be the only explanation.

By Definition \ref{def:specific}, the abstract knowledges are expected to be circuits with greater complexity than specific knowledges, since their associated inputs and outputs exhibit greater variability and thus express more complex patterns.
It follows that, in the context of the aforementioned works, the separation of specific and abstract knowledges should be positively correlated with the distinction between trivia information and algorithmic procedures.
Hence, it is reasonable to adopt the approximation that the capacity of an LLM for specific knowledges only depends on its parameter size, whereas the capacity for abstract knowledges depends only on $L$ and $H$.

The informal problem (\ref{eq:shape objective}) reduces to the maximization of $L$ and $H$ given a fixed parameter size.
However, we are left with two ambiguities:
first, this formulation does not specify the ratio between $L$ and $H$,
and second the head dimension $d_h$ and MLP width $W$ cannot be too small as the training may become unstable.
Regarding the second point, our experiments indicate that pretraining becomes more unstable with increased spikes if $d_h \leq 64$, so we set $d_h = 80$ (though it needs to be pointed out that the loss spikes may not be solely attributed to the choice of $d_h$, and high-quality data for instance may stabilize training and allow us to choose a smaller $d_h$).
Also, the MLP width $W$ is set to be equal to the hidden dimension $d=H d_h$.
Regarding the first point, controlled experiments (Figure \ref{fig:compare 2B shape}) indicate that the loss decreases slightly more rapidly with $L:H\approx 1$ than with other ratios, so we adopt this ratio.

\begin{figure}[h]
\centering
\includegraphics[scale=0.32]{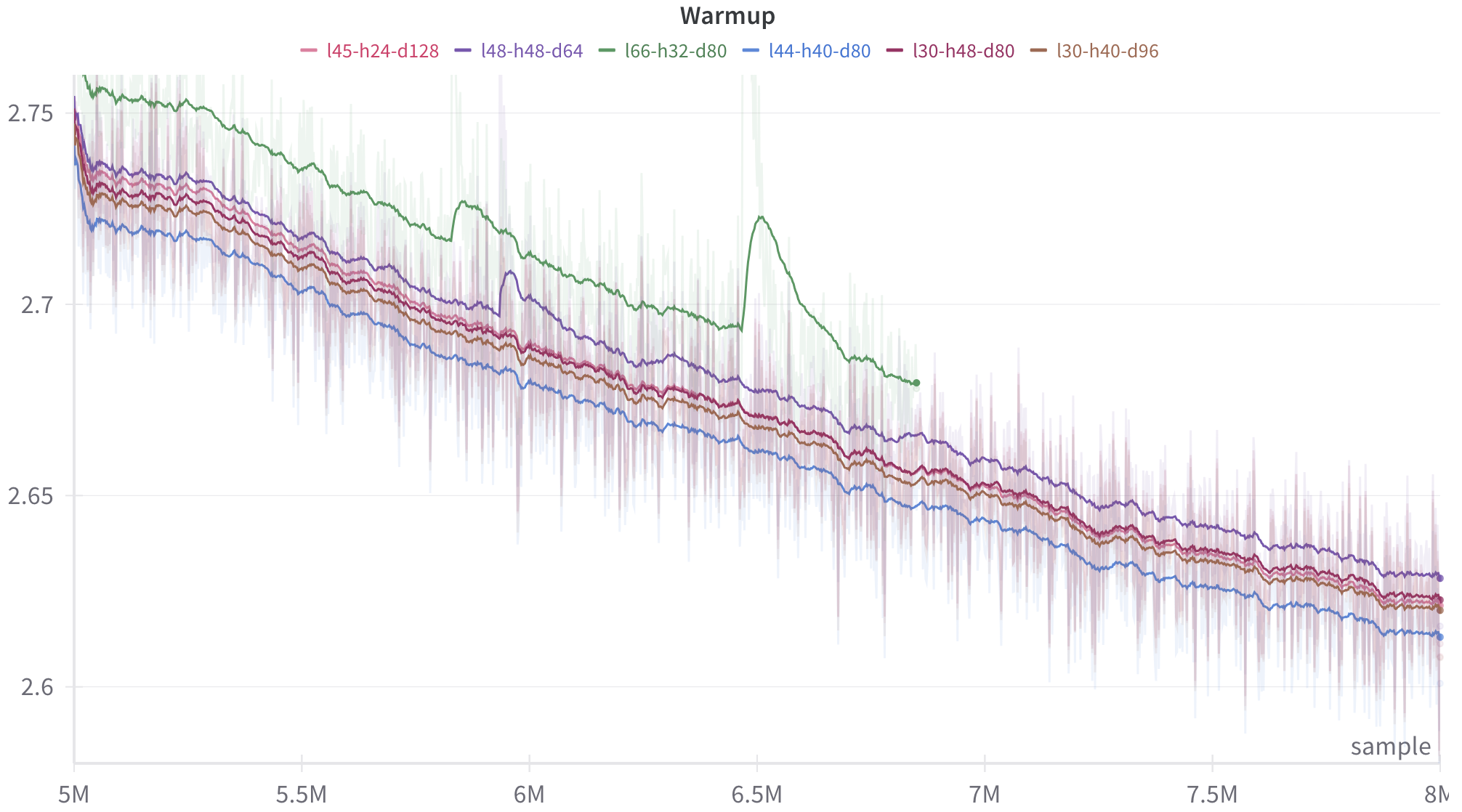}
\caption{Comparison of the training losses of models with different shapes, whose parameter sizes range in $2.1\sim2.4$B.
The legend l44h40d80 denotes $L=44,H=40,d_h=80$, and the $x$-axis denotes the amount of training samples.
Nevertheless, this comparison is not definite, since this is only the warmup stage of our training scheme (Section \ref{sec:two-stage pretrain}) and the ranking may change in the continual train stage when explicit memory is introduced.}
\label{fig:compare 2B shape}
\end{figure}

In addition, as discussed in Section \ref{sec:sparse memory}, our model uses grouped query attention (GQA), so the number of key-value heads $H_{kv}$ is set to be $8$, which is the usual choice for GQA.
The MLP layers are gated two-layer networks without bias, which are the default choice in recent years \cite{touvron2023llama,bai2023qwen,chowdhery2022palm,almazrouei2023falcon}.

Finally, the model shape is set to be $L=44, H=40, H_{kv}=8, d_h=80, W=3200$, with the total non-embedding parameter size being $2.4$B.

\subsection{Training Designs}
\label{sec:train design}

Similar to our architecture design, the design of our training scheme focuses on learning abstract knowledges.
The goal is to reduce the training compute, as the LLM no longer needs to memorize many of the specific knowledges.
This shift in learning objective implies that all the default settings for pretraining LLMs may need to be redesigned, as they were optimized for the classical scenario when the LLMs learn both abstract and specific knowledges.

\begin{enumerate}
\item Data:
Ideally, the pretraining data should have a high concentration of abstract knowledges and minimum amount of specific knowledges.
It is known that LLM pretraining is very sensitive to the presence of specific knowledges.
For instance, \cite{huang2024unified} observes that a small model can master arithmetic (e.g. addition of large numbers) if trained on clean data.
However, if the training data is mixed with trivial information (e.g. random numbers), then the test accuracy stays at zero unless the model size is increased by a factor of 1500.
It suggests that training on specific knowledges significantly inhibits the learning of abstract knowledges, and may explain why emergent abilities \cite{wei2022emergent} are absent from small models.
Notably, the Phi-3 model \cite{abdin2024phi3} is pretrained with a data composition that closely matches our desired composition.
Although the technical details are not revealed, it is stated that they filter data based on two criteria:
the data should encourage reasoning, and should not contain information that is too specific.

\item Initialization: \cite{zhang2024initialization} observes that initializing Transformer parameters with a smaller standard deviation ($d^c$ with $c<-1/2$ instead of the usual $\Theta(d^{-1/2})$ \cite{glorot2010understanding,he2015delving}) can encourage the model to learn compositional inference instead of memorization.
Specially, an arithmetic dataset is designed with a train set and an out-of-distribution test set, which admits two possible answers.
One answer relies on memorizing more rules during training, while the other requires an understanding of the compositional structure underlying these rules.
The proposed mechanism is that training with smaller initialization belongs to the condensed regime that encourages sparse solutions, contrary to training with large initialization that belongs to the kernel regime or critical regime \cite{luo2021phase,chen2023phase}.

\item Weight decay: \cite{power2022grokking,pearce2023machine} observe that using a larger weight decay coefficient (i.e. greater than the usual range of $0.001\sim0.1$) can guide LLMs to favor generalization over memorization, and accelerate the learning of generalizable solutions.
They consider settings that exhibit grokking \cite{power2022grokking} such that training would transit from perfect train accuracy and zero test accuracy to perfect test accuracy, and generalization ability is measured by how quickly this transition occurs.
Moreover, theoretically speaking, it is expected that training generative models needs stronger regularization than training regression models, in order to prevent the generated distributions from collapsing onto the training data and become trivial \cite{yang2022mathematical}.

\end{enumerate}

In summary, it is recommendable to pretrain the \name{} model with a data composition that emphasizes abstract knowledges and minimizes specific information, a smaller initialization for parameters, and a larger weight decay coefficient.

Since this work is only a preliminary version of \name{}, we decide to stick with the conventional setting for training and have not experimented with any of these ideas.
We look forward to incorporating these designs in future versions of the \name{} model.

\subsection{Two-stage Pretrain}
\label{sec:two-stage pretrain}

The \name{} model learns to write and read explicit memories during pretraining.
The training data is prepended with retrieved references;
the model encodes these references into explicit memories in real time, and integrates them into the self-attention computation of the training data.

Unexpectedly, our pretraining consists of two stages, which we name as warmup and continual train.
Only the continual train stage involves explicit memories, while the warmup stage uses the same format as ordinary pretraining.
Our motivation is depicted in Figure \ref{fig:two-stage reason}.
We observe that pretraining with explicit memories from the beginning would render the memories useless, as there appears to be no gain in training loss compared to ordinary pretraining.
Meanwhile, given a checkpoint from ordinary pretraining, continual training with explicit memory exhibits a visible decrease in training loss.
This comparison implies that a memory-less warmup stage might be necessary for pretraining a \name{} model.
One possible explanation for this phenomenon is that in the beginning of pretraining, the model is too weak to understand and leverage the explicit memories it generates.
Then, to reduce distraction, the self-attention layers might learn to always ignore these memories, thus hindering indefinitely the development of explicit memory.

\begin{figure}[h]
\centering
\subfloat{\includegraphics[width=0.48\textwidth]{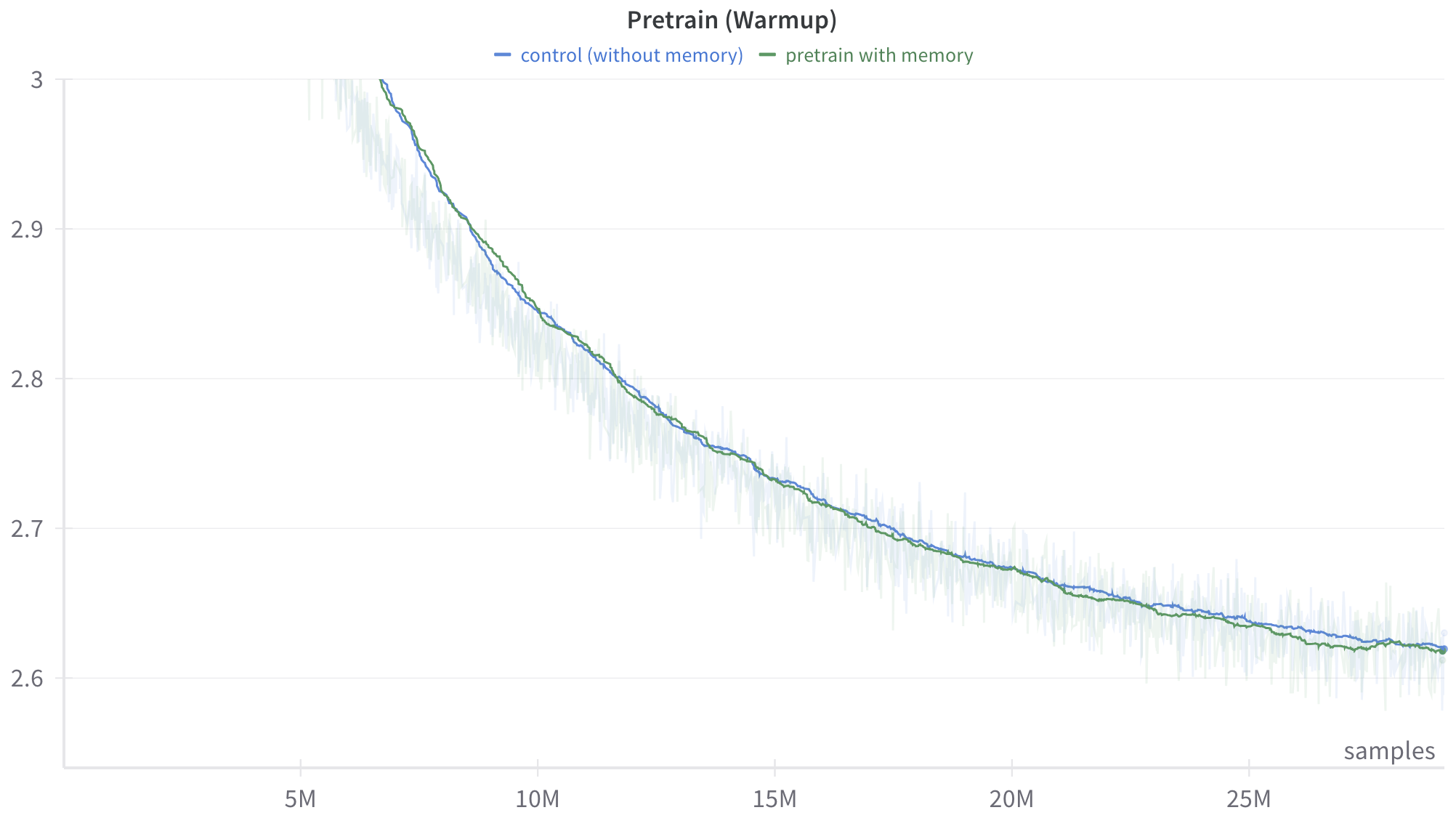}}
\quad
\subfloat{\includegraphics[width=0.48\textwidth]{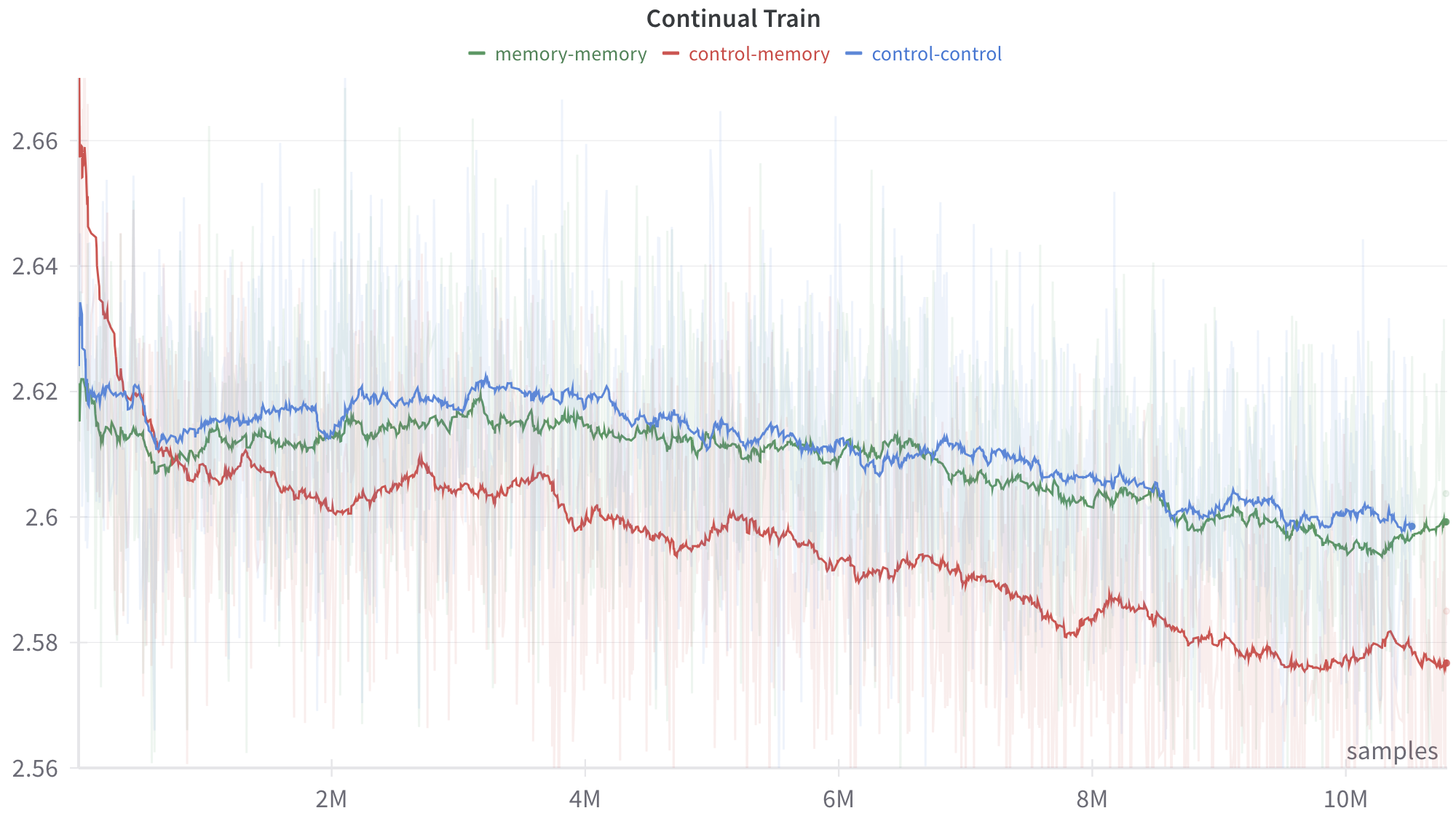}}
\caption{Left: Comparison of the warmup stage (training from scratch) with and without explicit memory.
The blue and green curves are trained without and with explicit memories, respectively.
Right: Comparison of the continual train stage.
The blue and green curves are continual trained from their warmup checkpoints, and the red curve is initialized with the warmup checkpoint of the blue curve and continual trained with explicit memory.
These plots indicate that pretraining a \name{} model requires a memory-less warmup stage.
These experiments use a smaller model with 0.92B non-embedding parameters $(L=40,H=32,d_h=64)$.
The warmup stage uses 60B data and the continual train stage uses 22B.}
\label{fig:two-stage reason}
\end{figure}

Another modification is to reduce the cost of continual train.
Recall from Section \ref{sec:inference} that during inference, each 64-token chunk attends to five explicit memories, or equivalently five 128-token references if using cold start, increasing the amount of input tokens by 10 times.
The inference process avoids the cost of memory encoding by precomputation or warm start, but for the continual train, the references need to be encoded in real time.
Our solution is to let the chunks share their references during training to reduce the total number of references in a batch.
Specifically, each chunk of a training sequence retrieves only one reference, and in compensation, attends to the references of the previous four chunks, besides its own reference.
Each train sequence has length 2048 and thus 32 chunks, so it is equipped with $32\times128=4096$ reference tokens.
The hidden features of these reference tokens are discarded once passing the last memory layer, since after that they no longer participate in the update of the hidden feature of the train tokens.
Hence, each continual train step takes slightly more than twice the amount of time of a warmup step.

It is necessary to avoid information leakage when equipping the training data with references
(i.e. the train sequence and its retrieved references could be the same text), for otherwise training becomes too easy and the model would not learn much.
Previously, Retro \cite{borgeaud2022retro} requires that no train sequence can retrieve a reference from the same document, but this criterion may be insufficient since near-identical paragraphs may appear in multiple documents.
Thus, we require that no train sequence can be accompanied by a reference sequence that has $>90\%$ overlap with it.
The overlap is measured by the length of their longest common subsequence divided by the length of the reference length.
Specially, given any train sequence $\mathbf{t}$ and reference $\mathbf{r}$, define their overlap by
\begin{align}
\label{eq.overlap}
\begin{split}
\text{overlap}(\mathbf{t}, \mathbf{r}) := \frac{1}{|\mathbf{r}|} \max \big\{ N &~\big|~ \exists 1\leq i_1 < \dots < i_N \leq |\mathbf{t}| ~\text{and}~ \exists 1\leq j_1 < \dots < j_N \leq |\mathbf{r}|\\
&~\text{and}~ |i_N-i_1| \leq 2|\mathbf{r}|, ~\text{such that}~ \mathbf{t}_{i_k} = \mathbf{r}_{j_k} ~\text{for}~k=1,\dots N \big\}
\end{split}
\end{align}
The constraint $|i_N-i_1| \leq 2|\mathbf{r}|$ ensures that the overlap is not over-estimated as $|\mathbf{t}|\to\infty$.

\section{Pretraining Data}
\label{sec:data}

This section describes the procedures for collecting and filtering our pretraining dataset and knowledge base (or reference dataset).

\subsection{Data Collection}

The pretrain data is gathered from English and Chinese text datasets, mostly publicly available collections of webpages and books.
We also include code, SFT data (supervised finetuning), and synthetic data.

Specially, the English data mainly consists of RedPajamaV2 \cite{together2023redpajama}, SlimPajama \cite{cerebras2023slimpajama} and the Piles \cite{DBLP:journals/corr/abs-2101-00027}, in total 200TB prior to filtering.
The Chinese data mainly comes from Wanjuan \cite{DBLP:journals/corr/abs-2308-10755}, Wenshu~\cite{courtx9996x9875}, and MNBVC \cite{mnbvc}, in total 500TB prior to filtering.
The code data mainly comes from Github, and we take the subset with the highest repository stars.
The SFT data is included since these samples generally have higher quality than the webpages.
We use the same data as in SFT training (Section \ref{sec:SFT}), except that these samples are treated as ordinary texts during pretraining, i.e. all tokens participate in the loss computation, not just the answer tokens.

\subsection{Filtering}

The raw data is filtered with three steps: deduplication, rule-based filtering, and model-based filtering.

First, deduplication is performed with MinHash for most of the datasets.
One exception is RedPajamaV2, which already comes with deduplication labels.

Second, we devise heuristic, rule-based filters analogous to the ones from
\cite{longpre2023pretrainer,rae2021scaling,conneau2019unsupervised}.
The purpose is to eliminate texts that are ostensibly unsuitable for training, such as ones that only contain webpage source codes, random numbers, or incomprehensible shards.
Our filters remove documents with less than 50 words, documents whose mean word lengths exceed 10 characters,
documents with 70\% of context being non-alphabetic characters,
documents whose fractions of unique words are disproportionately high,
documents whose entropy of unigrams is excessively low, and so on.



Finally, we select the subset of data with highest ``quality", a score produced by a finetuned BERT model.
Specially, we sample ten thousand documents and grade them by the XinYu-70B model \cite{li2024newsbench,liang2024uhgeval} with prompt-guided generation.
The prompt asks the model to determine whether the input text is informative and produce a score between $0$ and $5$.
Then, these scores are used to finetune the Tiny-BERT model \cite{jiao2020tinybert}, which has only 14M parameters.
The hyperparameters of this finetuning are optimized with respect to a held-out validation set.
After that, we use this lightweight BERT to grade the entire dataset.




\begin{remark}
\label{remark:data filter}
Recall from Section \ref{sec:train design} that the pretraining data of \name{} should emphasize abstract knowledges and minimize specific knowledges.
The purpose is to not only obtain a lightweight LLM with an ideal distribution of knowledges in accordance with the memory hierarchy (Figure \ref{fig:knowledge_distribution}), but also prevent the specific knowledges from hindering the learning process of the model.
The focus of our prompt on ``informativeness" might be contradictory to this goal, since the selected texts that are rich in information content may contain too many specific knowledges.
For future versions of \name{}, we will switch to a model-based filter favoring texts that exhibit more reasoning and less specifics.
\end{remark}



\begin{figure}
    \centering
    \includegraphics[width=0.6\textwidth]{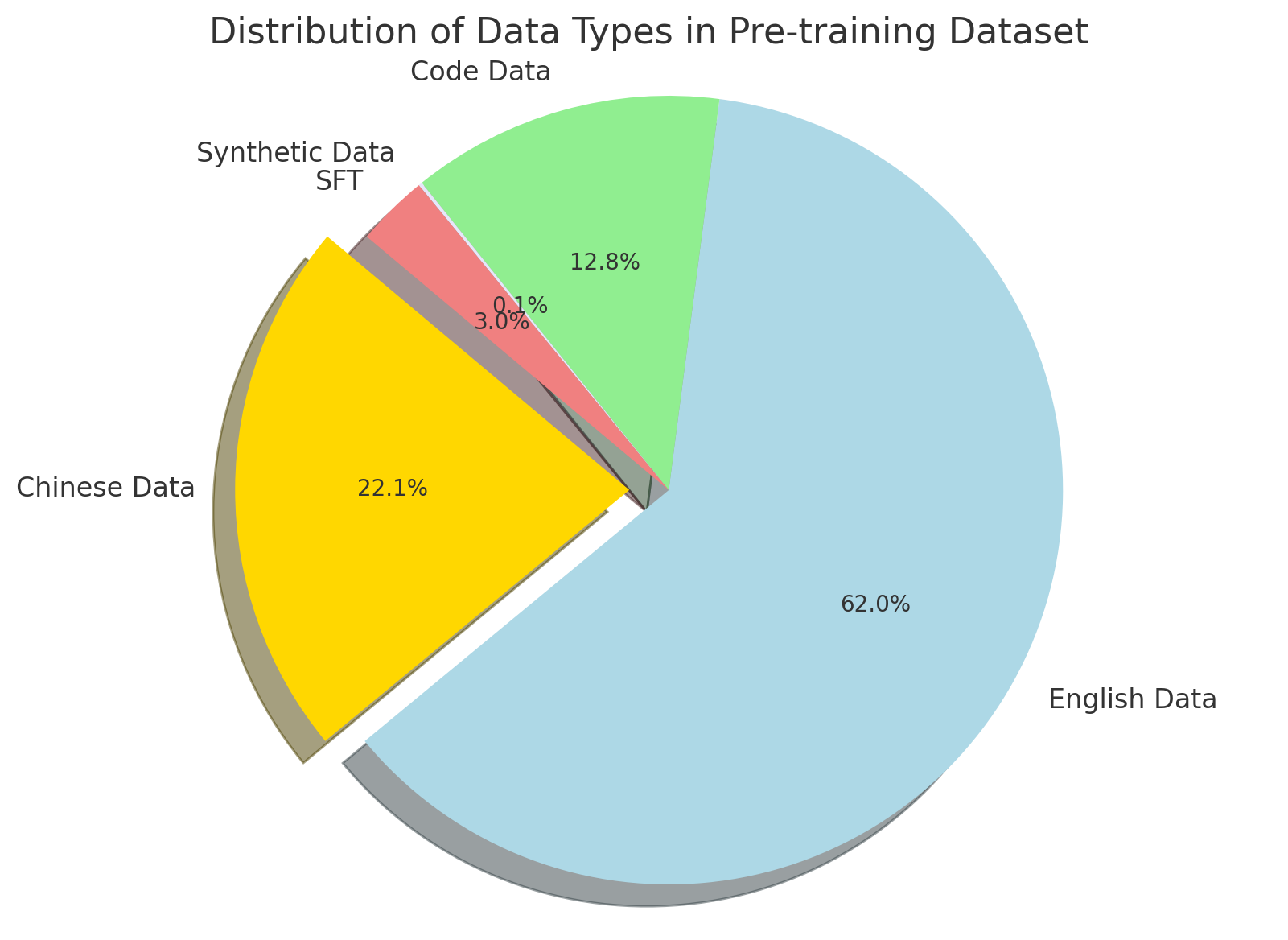}
\caption{Composition of our pretraining dataset.
}
\label{fig:pretrain_data_composition}
\end{figure}

The filtered dataset consists of around four trillion tokens, and its composition is illustrated in Figure \ref{fig:pretrain_data_composition}.

\subsection{Tokenizer} 

Similar to our dataset, our tokenizer mainly consists of Chinese and English tokens.
The English vocabulary comes from the 32000 tokens of the LLaMA2 tokenizer.
We include roughly the same amount of Chinese tokens produced from byte-pair encoding (BPE).
The BPE is trained on a 20GB Chinese corpus that consists of Chinese news and e-books.
After deduplication, the final vocabulary has 60299 tokens.


\subsection{Knowledge Base}
\label{sec:ref}

The knowledge base (or reference dataset) is used during training and inference as the source of explicit memories, as depicted in Figure \ref{fig:opening}.
It consists of reference texts that are split into token sequences with length $\leq 128$, as described in Section \ref{sec:inference}.

Heuristically, a larger knowledge base is always better, as long as it does not contain misinformation,
so it is not surprising that the reference dataset of Retro contains its entire pretrain dataset \cite{borgeaud2022retro}.
Nevertheless, the storage of explicit memories is more costly than plain texts despite our sparsification (Section \ref{sec:sparse memory}), and thus to save storage space, we select a small subset of our pretrain dataset as the knowledge base.

With a focus on high quality data, we include for references the English Wikipedia, WikiHow, the Chinese baike dataset, the subset of English and Chinese books whose titles appear academic, Chinese news, synthetic data and high quality codes.
These texts are tokenized and split into chunks of 128 tokens, resulting in $1.1\times 10^8$ references in total.

One may be curious whether our knowledge base may contain some of the evaluation questions, rendering our evaluation results (Section \ref{sec:eval-general}) less credible.
To prevent such leakage, we include in our evaluation code a filtering step, such that for each evaluation question, if a retrieved reference has an overlap with the question that exceeds a threshold, then it is discarded.
This deduplication is analogous to the one used when preparing for continual train (Section \ref{sec:two-stage pretrain}),
with the overlap measured by (\ref{eq.overlap}).
The threshold $2/3$ is chosen since we observe that typically a reference that contains a question would have an overlap $\geq 80\%$, while a relevant but distinct reference would have an overlap $\leq 40\%$.

\begin{remark}
\label{remark:ref select}
Currently, the compilation of the knowledge base is based on human preference.
For future versions of \name{}, we plan to take a model-oriented approach and measure the fitness of a candidate reference by its actual utility,
e.g. the expected decrease in the validation loss of the LLM conditioned on this reference being retrieved by a random validation sample.
\end{remark}

\section{Pretrain}  
\label{sec:pretrain}

This section describes the details of the pretraining process.
The two-stage pretrain and memory-augmented data follow the designs introduced in Section \ref{sec:two-stage pretrain}.
As an interpretation, the \name{} model during the warmup stage develops its reading comprehension, which is necessary during the continual train stage for initiating memory formation.

\subsection{Set-up}

Training is conducted with the Megatron-DeepSpeed package \cite{megatron-deepspeed} and uses mixed-precision training with bfloat16 model parameters, bfloat16 activations, and float32 AdamW states.
The batch size is around 4 million training tokens with sequence length 2048, not including the reference tokens.
The weight decay is the common choice of 0.1.

We adopt the ``warmup-stable-decay" learning rate schedule of MiniCPM \cite{hu2024minicpm}, which is reportedly better than the usual cosine schedule in term of training loss reduction.
The learning rate linearly increases to the maximum value, then stays there for the majority of training steps, and finally in the last 10\% steps decays rapidly to near zero.
Our short-term experiments confirm the better performance of this schedule.
Nevertheless, frequent loss spikes and loss divergences are encountered during the official pretraining, so we have to deviate from this schedule and manually decrease the learning rate to stabilize training.

Originally, it is planned that both the warmup and continual train stages go through the entire 4T token pretrain dataset (Section \ref{sec:data}).
Due to the irremediable loss divergences, both stages have to be terminated earlier.

\subsection{Warmup Stage}
\label{sec:warmup}

\begin{figure}[h]
\centering
\subfloat{\includegraphics[width=0.475\textwidth]{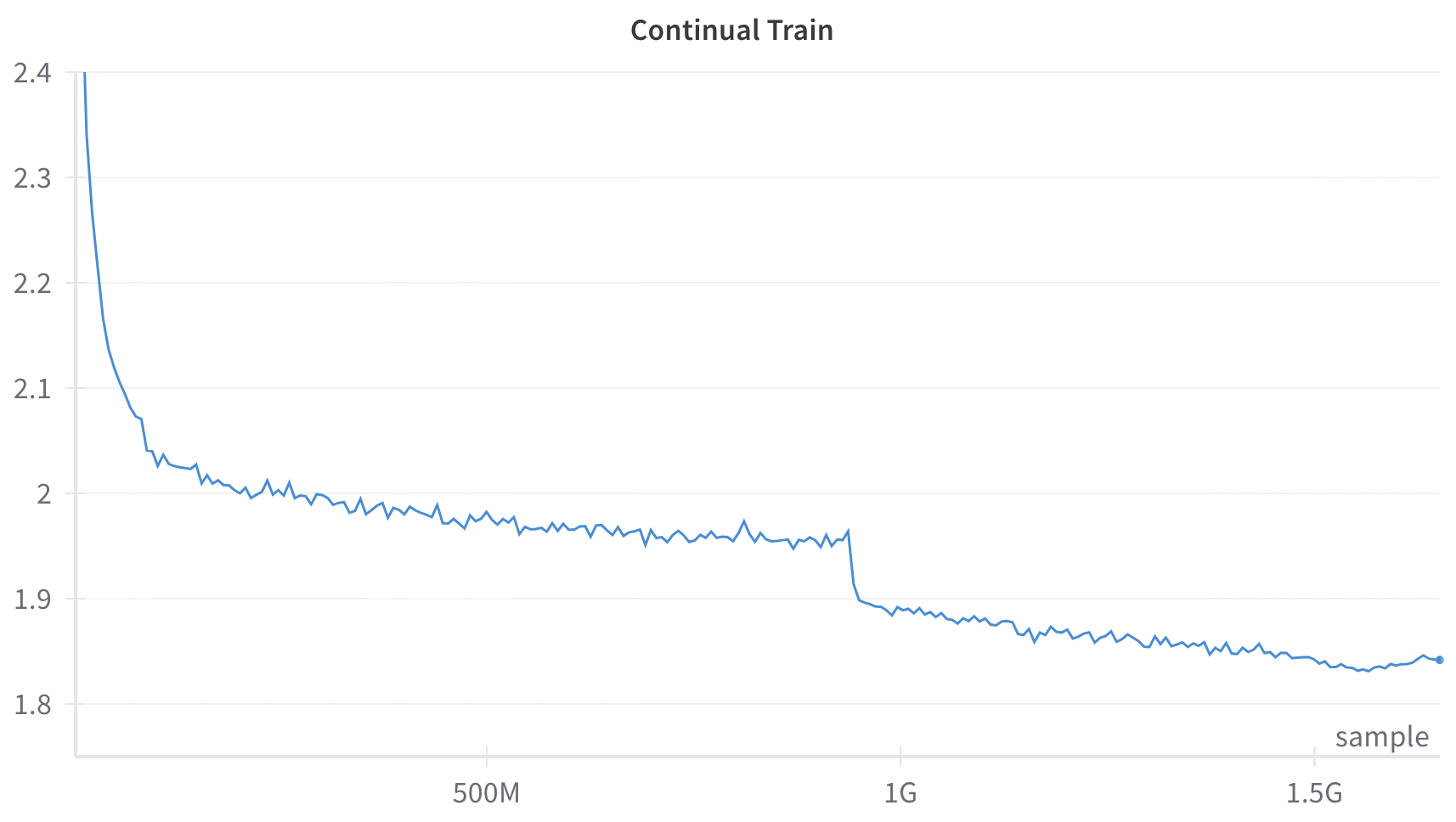}}
\quad
\subfloat{\includegraphics[width=0.475\textwidth]{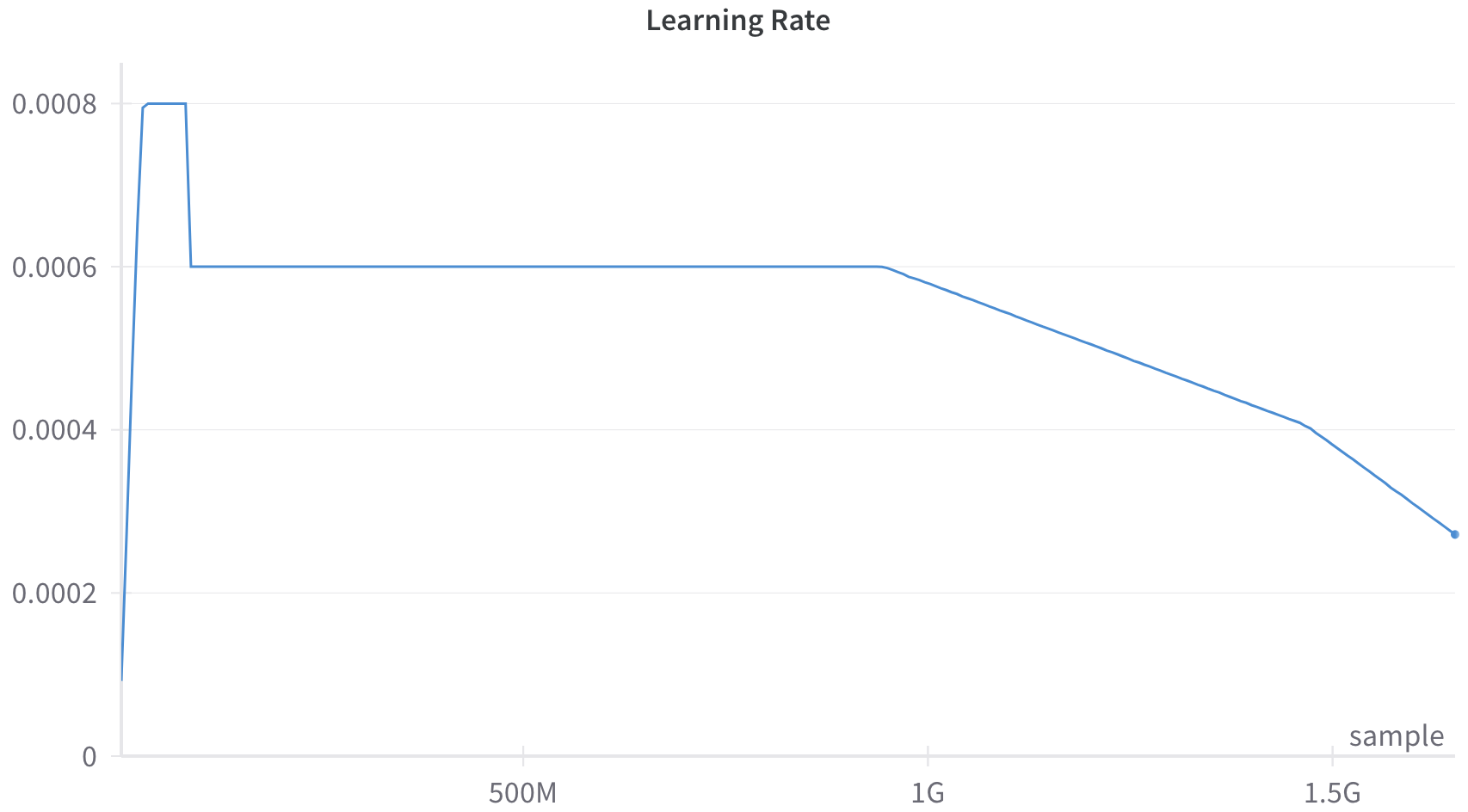}}
\caption{The warmup stage without explicit memory.
Left: Training loss.
Right: Learning rate schedule.}
\label{fig:warmup}
\end{figure}

The training loss and learning rate schedule are plotted in Figure \ref{fig:warmup}.
Whenever severe loss divergence occurs, we restart from the last checkpoint before the divergence with a smaller learning rate, and thus the divergences are not shown in the figure.
Eventually, the training terminates at around 3.1T tokens, when reducing the learning rate can no longer avoid loss divergence.


\subsection{Continual Train Stage}
\label{sec:continual train}

\begin{figure}[h]
\centering
\subfloat{\includegraphics[width=0.45\textwidth]{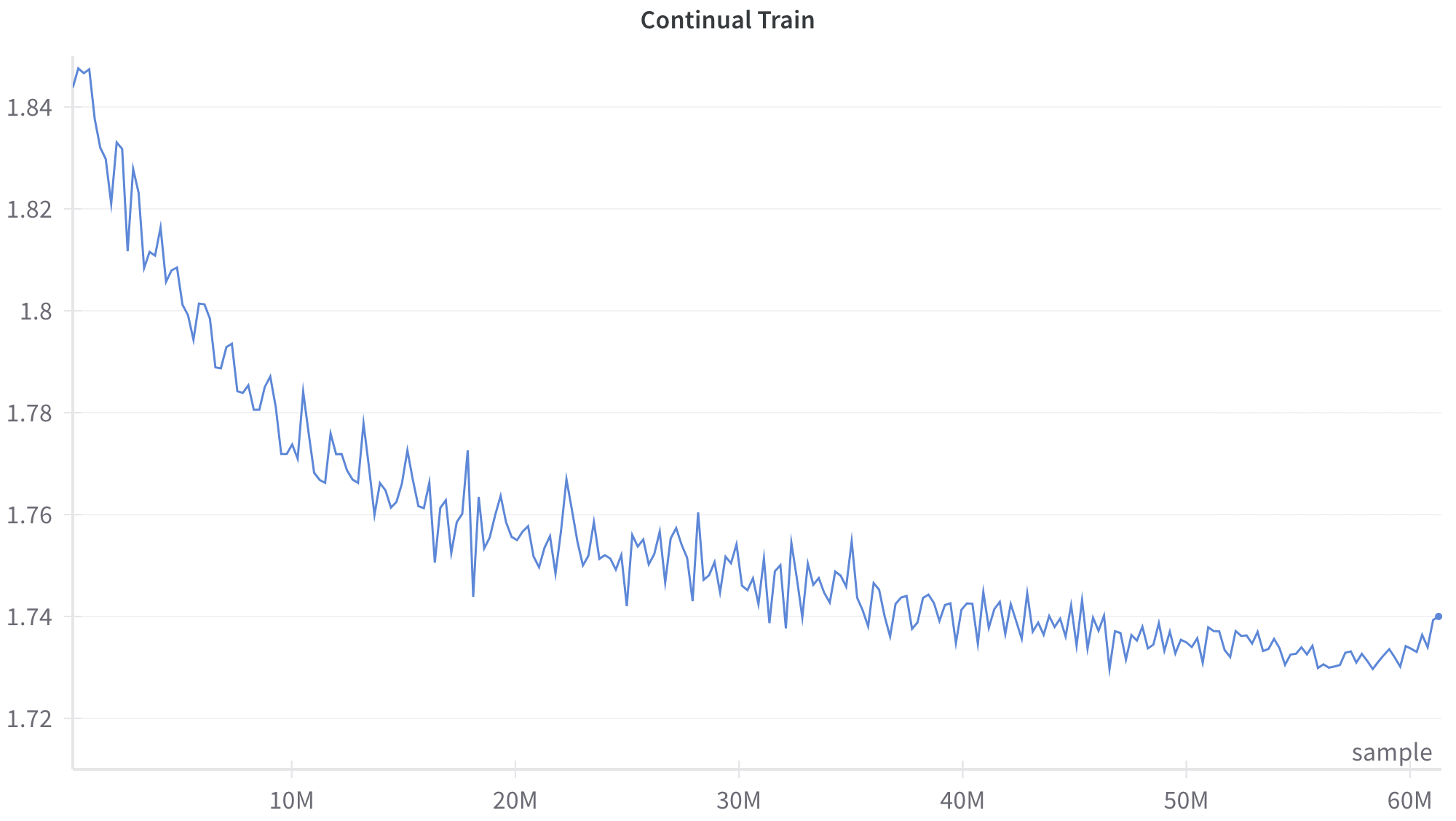}}
\quad
\subfloat{\includegraphics[width=0.45\textwidth]{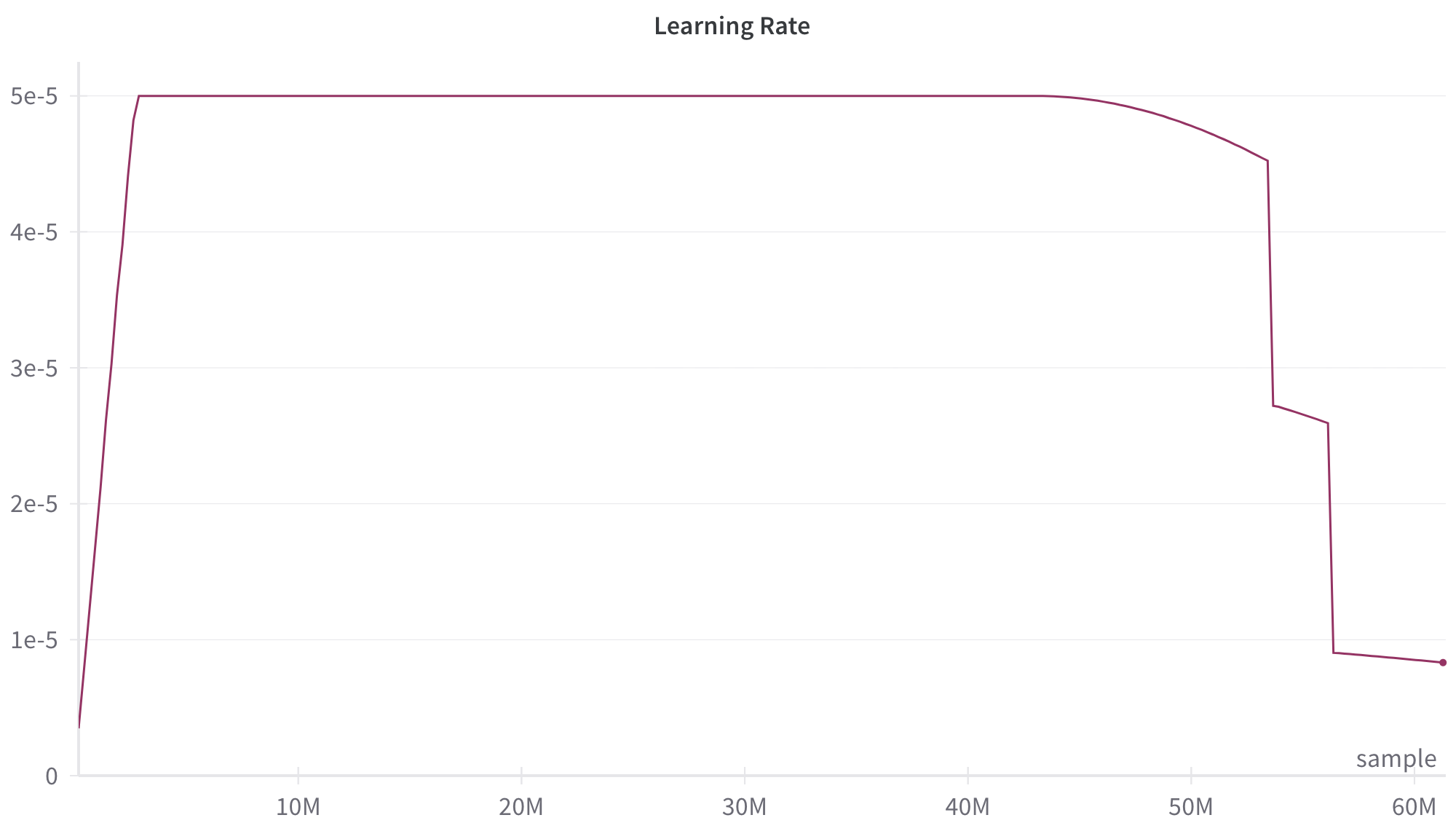}}
\caption{The continual train stage with explicit memory.
Left: Training loss.
Right: Learning rate schedule.}
\label{fig:continual train}
\end{figure}

The explicit memories enter into the \name{} model at this stage.
The training steps are slower since the model needs to encode the references retrieved for the pretrain data to explicit memories in real time, and each step takes a bit more than twice the time of a warmup step.
The training loss and learning rate schedule are plotted in Figure \ref{fig:continual train}.

The loss divergence soon becomes irremediable at around 120B training tokens, much shorter than the planned 4T tokens, and training has to stop there.
One possible cause is that the continual train is initialized from the latest warmup checkpoint, which is located immediately before the break down of the warmup stage, and thus is already at the brink of divergence.
The smaller learning rate of continual train delays the onset of divergence but not for long.

\section{Fine-tuning and Alignment}  
\label{sec:finetune}

This section describes our model finetuning, specifically supervised finetuning (SFT) and direct preference optimization (DPO).

\subsection{Supervised Finetuning}  
\label{sec:SFT}

Analogous to the StableLM model \cite{bellagente2024stable},
our \name{} model is finetuned on a diverse collection of SFT datasets.
We use the following datasets, which are publicly accessible on the Hugging Face Hub:
UltraChat \cite{ding2023enhancing}, WizardLM \cite{xu2023wizardlm}, SlimOrca \cite{SlimOrca}, ShareGPT \cite{wang2023openchat}, Capybara \cite{daniele2023amplify-instruct}, Deita \cite{liu2023what}, and MetaMathQA \cite{yu2023metamath}.
We also include synthetic data with emphasis on multi-round chat, mathematics, commonsense and knowledge.
Each training sample consists of one or more rounds of question and answer pairs.
We remove any sample with more than eight rounds.
The final composition is listed in Table \ref{tab:finetune-datasets}.


\begin{table}[ht]
\footnotesize
\centering
\begin{tabular}{l c r}
\toprule
\textbf{Dataset} & \textbf{Source} & \textbf{Number of Samples} \\[1ex]
\midrule
UltraChat & HuggingFaceH4/ultrachat\_200k & 194409 \\ [0.2ex]
WizardLM & WizardLM/WizardLM\_evol\_instruct\_V2\_196k & 80662 \\ [0.2ex]
SlimOrca & Open-Orca/SlimOrca-Dedup & 143789 \\ [0.2ex]
ShareGPT & openchat/openchat\_sharegpt4\_dataset & 3509 \\ [0.2ex]
Capybara & LDJnr/Capybara & 7291 \\ [0.2ex]
Deita & hkust-nlp/deita-10k-v0 & 2860 \\ [0.2ex]
MetaMathQA & meta-math/MetaMathQA & 394418 \\ [0.2ex]
\midrule
Multi-round Chat & synthetic & 20000 \\ [0.2ex] 
Mathematics & synthetic & 20000  \\  [0.2ex] 
Commonsense & synthetic & 150000\\ [0.2ex] 
Knowledge & synthetic & 270000 \\ [0.2ex] 
\midrule
\end{tabular}
\caption{Composition of SFT dataset.}
\label{tab:finetune-datasets}
\end{table}



The training process uses the cosine learning rate schedule with a max
learning rate of $5 \times 10^{-5}$ and a $10\%$ linear warmup phase.
The weight decay is 0.1, batch size is 512, and max sequence length is 2048 tokens.
Finetuning is performed for 3 epochs.

\subsection{Direct Preference Optimization} 
\label{sec:DPO}

The \name{} model is further finetuned by DPO \cite{rafailov2024direct}, to align with human preference and improve its conversation skills.
The DPO dataset consists of general conversations (UltraFeedback Binarized \cite{tunstall2023zephyr}), math questions (Distilabel Math \cite{argilla2023distilabel}) and codes questions (Synth Code \cite{pvduy2023synth}).
The training uses the cosine learning rate schedule with max lr $4\times 10^{-6}$.
The inverse temperature $\beta$ of the DPO loss is set to $0.01$.
The improvement from DPO is displayed in Section \ref{sec:conversation}.


\section{Evaluation} 
\label{sec:eval}

We evaluate the general abilities (benchmark tasks), conversation skills, professional abilities (law and medicine), and facutality \& hallucination of the \name{} model.
We also measure its decoding speed.
Our model is compared with SOTA LLMs of similar and larger sizes, as well as RAG models.

\subsection{General Abilities} 
\label{sec:eval-general}

To evaluate the general abilities of \name{}, we adopt all tasks from the Huggingface leaderboard and also include two Chinese tasks.
Most of the results are displayed in Table~\ref{tab:basic-results}, while TruthfulQA is listed in Table~\ref{tab:hallu-results}.
All results are obtained in bfloat16 format, using the lm-evaluation-harness package \cite{eval-harness} and the configuration of HuggingFace Open LLM leaderboard \cite{open-llm-leaderboard}, i.e. the number of few-shot examples and grading methods.

As described in Section \ref{sec:ref}, to prevent cheating, a filtering step is included in the retrieval process so that the model cannot copy from references that resemble the evaluation questions.

\begin{table}[h]
\centering
\footnotesize
\rowcolors{2}{Tue-red!10}{white}
\begin{tabular}{cccccc>{\centering\arraybackslash}p{0.075\linewidth}ccc}
\toprule
                &         &         & \multicolumn{5}{c}{English}& \multicolumn{2}{c}{Chinese}\\ 
                                      \cmidrule(lr){4-8}                                                  \cmidrule(lr){9-10}
LLM& Size & Avg.    & ARC-C& HellaSwag& MMLU& Winogrande& GSM8k& CEVAL &CMMLU\\
\midrule
Falcon-40B& 41B& 55.75& 61.86& 85.28& 56.89& 81.29& 21.46& 41.38& 42.07\\
Llama2-7B-Chat& 6.5B& 46.87& 52.90& 78.55& 48.32& 71.74& 7.35& 34.84& 34.40\\
Llama2-13B-Chat& 13B& 51.78& 59.04& 81.94& 54.64& 74.51& 15.24& 38.63& 38.43\\
Llama3-8B-it& 7.0B& 65.77& 62.03& 78.89& 65.69& 75.77& 75.82& 50.52& 51.70\\
Vicuna-13B-v1.5& 13B& 52.02& 57.08& 81.24& 56.67& 74.66& 11.30& 41.68& 41.53\\
Mistral-7B-v0.1& 7.0B& 59.15& 59.98& 83.31& 64.16& 78.37& 37.83& 45.91& 44.49\\
Gemma-2B-it& 2.0B& 36.64& 38.02& 40.36& 55.74& 35.29& 55.88& 8.26& 29.94\\
Gemma-7B-it& 7.8B& 47.23& 51.45& 71.96& 53.52& 67.96& 32.22& 27.93& 25.70\\
MiniCPM-2B-SFT& 2.4B& 54.37& 47.53& 71.95& 51.32& 67.72& 45.26& 48.07& 48.76\\
Phi-2& 2.5B& 55.70& 61.09& 75.11& 58.11& 74.35& 54.81& 34.40& 32.04\\
ChatGLM3-6B& 5.7B& 54.62& 41.38& 66.98& 50.54& 64.25& 51.25& 54.01& 53.91\\
Baichuan2-7B-Chat& 6.5B& 55.16& 52.73& 74.06& 52.77& 69.77& 28.28& 53.12& 55.38\\
Qwen1.5-1.8B-Chat& 1.2B& 49.67& 38.74& 60.02& 45.87& 59.67& 33.59& 55.57& 54.22\\
Qwen1.5-4B-Chat& 3.2B& 58.15& 43.26& 69.73& 55.55& 64.96& 52.24& 61.89& 59.39\\
Qwen1.5-7B-Chat& 6.5B& 64.80&	56.48& 79.02& 60.52& 66.38& 54.36& 68.20& 68.67\\
\hline
\name{}-SFT & 2.4B& 63.31& 58.11& 80.51& 59.68& 74.51& 52.84& 59.29& 58.24\\
with vector compression & 2.4B& 63.33& 57.94& 80.65& 59.66& 75.14& 52.24& 59.66& 58.05\\
without memory & 2.4B& 60.80& 57.42& 73.14& 57.29& 74.35& 51.33& 56.32& 55.72\\
\bottomrule
\end{tabular}
\caption{Few-shot evaluation of general abilities.
The model sizes only include non-embedding parameters.}
\label{tab:basic-results}
\end{table}

The results of our model without using explicit memory is included, which indicates that explicit memory boosts the average score by $2.51\%$.
In comparison, the score difference between Llama2-7B and 13B is $4.91\%$ while the latter has twice the amount of non-embedding parameters.
Thus, it reasonable to say that explicit memory can increase the ``effective model size" by $2.51 / 4.91 \approx 51.1\%$.
(Also, the score difference between Qwen-1.8B and 4B is $8.48\%$ while the latter has $167\%$ more non-embedding parameters.
With respect to this scale, explicit memory increases the ``effective model size" by $1.2.51 / 8.48 \times 1.67 \approx 49.4\%$.)

We also include the results of \name{} with vector compression (Section \ref{sec:sparse memory}).
Even though the key-value vectors of the explicit memories are compressed to $8.75\%$ of their original sizes, the performance of our model does not show any degradation.

Other supplementary evaluations can be found in Appendix \ref{appendix:evaluation}.

\vspace{0.5em}

Next, we compare with a LLM that is pretrained with text retrieval.
Specially, we consider the largest version of the Retro++ model \cite{wangRetro++2023}, Retro++ XXL with 9.5B parameters.
All tasks from Table 6 of \cite{wangRetro++2023} are taken, except for HANS, which is not available on lm-eval-harness, and all tasks are zero-shot.
Similar to Table \ref{tab:basic-results}, \name{} is tested with a filtering threshold of 2/3.
The results are listed in Table \ref{tab:retro++}, where \name{} outperforms the model with much larger parameter size and reference dataset size.

\begin{table}[h]
\footnotesize
\centering
\begin{tabular}{cc|c|cccccccc}
\toprule
\rowcolor{Tue-red!10}
LLM& Param size & Avg.& HellaSwag& BoolQ& Lambada& RACE\\
Retro++ XXL& 9.1B& 61.0& 70.6& 70.7& 72.7& 43.2\\
\name{}-SFT& 2.4B& \textbf{64.7}& 83.3& 80.4& 57.9& 45.3\\
\hline \hline
\rowcolor{Tue-red!10}
 & Reference size & & PiQA& Winogrand& ANLI-R2& WiC\\
 & 330B& & 77.4& 65.8& 35.5& 52.4\\
 & 14.3B& & 76.6& 75.8& 41.6& 56.9\\
\bottomrule
\end{tabular}
\caption{Zero-shot comparison of LLMs pretrained with retrieval.
The scores of Retro++ are taken from \cite{wangRetro++2023}.
The size of a reference dataset is its number of tokens.
The non-embedding parameter size of Retro++ is inferred from its vocabulary size.}
\label{tab:retro++}
\end{table}

\subsection{Conversation Skill} 
\label{sec:conversation}

Next we evaluate the conversation skills of \name{}.
We use MT-Bench (the Multi-turn Benchmark) \cite{zheng2024judging} that consists of multi-round and open-ended questions.
The results are listed in \cref{tab:MT-Bench}, including the \name{} model finetuned by DPO introduced in Section \ref{sec:DPO}.

\begin{table}[h]
\small
\centering
\begin{tabular}{@{}lcc@{}}
\toprule
LLM                      & Size & MT-Bench Score \\ \midrule
Phi-3                    & 3.6B    & 8.38 \\
Mistral-7B-Instruct-v0.2 & 7.0B      & 7.60            \\
Qwen1.5-7B-Chat          & 6.5B      & 7.60            \\
Zephyr-7B-beta           & 7.0B      & 7.34           \\
MiniCPM-2B-DPO           & 2.4B    & 6.89 \\
Llama-2-70B-Chat         & 68B     & 6.86           \\
Mistral-7B-Instruct-v0.1 & 7.0B      & 6.84           \\
Llama-2-13B-Chat         & 13B     & 6.65           \\
Llama-2-7B-Chat          & 6.5B      & 6.57           \\
MPT-30B-Chat             & 30B     & 6.39           \\
ChatGLM2-6B              & 6.1B      & 4.96           \\
Falcon-40B-Instruct      & 41B     & 4.07           \\
Vicuna-7B                & 6.5B      & 3.26           \\
\midrule
\name{}-SFT                   & 2.4B    & 5.31           \\ 
\name{}-DPO               & 2.4B    & 5.80           \\
\bottomrule
\end{tabular}
\caption{MT-Bench scores.
The model sizes only include non-embedding parameters.}
\label{tab:MT-Bench}
\end{table}

We obtain all these scores using GPT-4-0613 as grader, following the single answer grading mode of MT-Bench.
Our model outperforms Vicuna-7B, Falcon-40B-Instruct, and ChatGLM2-6B with fewer parameters.

\subsection{Hallucination and Factuality}

Despite considerable progress, LLMs still face issues with hallucination, leading to outputs that often stray from factual accuracy \cite{rawte-etal-2023-troubling}.
Conceptually, \name{} should be less vulnerable to hallucination, since its explicit memories directly correspond to reference texts, whereas compressing texts into the model parameters might incur information loss.
To evaluate hallucination, we select two English datasets, TruthfulQA \cite{lin-etal-2022-truthfulqa} and HaluEval, and one Chinese dataset \cite{li2023halueval}, HalluQA \cite{cheng2023evaluating}.
TruthfulQA is implemented with lm-evaluation-harness \cite{eval-harness}, while HaluEval and HalluQA are implemented with UHGEval \cite{UHGEval}.
The results are shown in Table \ref{tab:hallu-results}, with \name{} achieving the highest scores on most tasks.


\begin{table}[h]
\centering
\footnotesize
\rowcolors{2}{Tue-red!10}{white}
\begin{tabular}{llllllll}
\toprule
                &         &         & \multicolumn{4}{c}{English}                                       & Chinese \\ 
                                      \cmidrule(lr){4-7}                                                  \cmidrule(lr){8-8}
LLM             & Size & Avg.    & HaluE-QA & HaluE-Dialogue & TruQA-MC1 & TruQA-MC2 & HalluQA \\
\midrule
Falcon-40B      & 41B     & 35.37 & 46.84     & 40.80           & 27.29        & 41.71        & 20.18 \\
Llama2-13B      & 13B     & 28.01 & 23.34     & 31.05           & 25.95        & 36.89        & 22.81 \\
Vicuna-13B-v1.5 & 13B     & 37.07 & 24.93     & 37.35           & 35.13        & 50.88        & N/A   \\
Baichuan2-13B   & 13B     & 37.64 & 46.02     & 45.45           & 26.81        & 39.79        & 30.12 \\
Gemma-7B        & 7.8B    & 37.03 & 50.91     & 48.19           & 20.69        & 46.65        & 18.71 \\
Mistral-7B-v0.1 & 7.0B    & 34.18 & 40.68     & 37.64           & 28.03        & 42.60        & 21.93 \\
Llama2-7B       & 6.5B    & 36.80 & 52.46     & 51.93           & 25.09        & 38.94        & 15.59 \\
Baichuan2-7B    & 6.5B    & 38.63 & \textbf{62.33} & 47.84      & 23.01        & 37.46        & 22.51 \\
ChatGLM3-6B     & 5.7B    & 40.96 & 43.38     & 50.03           & 33.17        & 49.87        & 28.36 \\
Qwen1.5-4B-Chat & 3.2B    & 33.30 &	24.64     &	37.72           & 29.38        & 44.74        & 30.00 \\
Phi-2           & 2.5B    & 38.31 & 50.71     & 39.55           & 31.09        & 44.32        & 25.89 \\
MiniCPM-SFT     & 2.4B    & 36.47 & 49.24     & 47.80           & 24.11        & 37.51        & 23.71 \\
Gemma-2B        & 2.0B    & 38.04 & 53.41     & 52.22           & 24.60        & 39.78        & 20.18 \\
Qwen1.5-1.8B-Chat & 1.2B  & 37.52 & 47.18     & 52.11           & 26.68        & 40.57        & 21.05 \\
\midrule
\name{}-SFT     & 2.4B    & \textbf{48.60} & 56.61     & \textbf{53.91}           & \textbf{38.80}        & \textbf{57.72}        & \textbf{35.96} \\
\bottomrule
\end{tabular}
\caption{Evaluation of hallucination.
HaluE and TruQA denote HaluEval and TruthfulQA, respectively.
Bolded numbers are the best results.
The model sizes only include non-embedding parameters.
Vicuna-13B-v1.5 gets one N/A since that entry is near zero and seems abnormal.}
\label{tab:hallu-results}
\end{table}

\subsection{Professional Tasks}
\label{sec:profession}

One benefit of using explicit memory is that the LLM can easily adapt to new fields and tasks by updating its knowledge base.
One can simply import task-related references into the knowledge base of \name{}, and optionally, convert them to explicit memories in the case of warm start.
Then, the model can perform inference with this new knowledge, skipping the more costly and possibly lossy process of finetuning, and running faster than RAG.
This cost reduction has been demonstrated in Figure \ref{fig:total_cost_usage_2B} and Appendix \ref{appendix:cost}, and could facilitate the rapid deployment of LLMs across various industries.

Besides cost reduction, we need to demonstrate that \name{} can perform no worse than RAG.
We consider two professional tasks in law and medicine.
The legal task consists of multiple-choice questions from the Chinese National Judicial Examination (JEC-QA) dataset \cite{zhong2020JECQA}.
The field-specific references are legal documents from the Chinese national laws and regulations database \cite{flk-npc-gov-cn}.
These references are merged with our general-purpose knowledge base (Section \ref{sec:ref}) for inference.

The medical task consists of the medicine-related questions of C-Eval, MMLU and CMMLU, specifically from the following subsets:
\begin{itemize}
\item C-Eval: clinical medicine, basic medicine
\item MMLU: clinical knowledge, anatomy, college medicine, college biology, nutrition, virology, medical genetics, professional medicine
\item CMMLU: anatomy, clinical knowledge, college medicine, genetics, nutrition, traditional Chinese medicine, virology
\end{itemize}
Our knowledge base is supplemented with medical texts from the open-source medical books dataset \cite{medical-books}.

\begin{table}[h]
\small
\centering
\begin{tabular}{lcclccl}
\toprule
                & \multicolumn{3}{c}{JEC-QA}& \multicolumn{3}{c}{MED}\\ 
                \cmidrule(lr){2-4}                                          \cmidrule(lr){5-7}
LLM & 3 refs& 5 refs & 7 refs& 3 refs& 5 refs &7 refs\\
\midrule
\name{}-2B-SFT& \multicolumn{3}{c}{39.38}& \multicolumn{3}{c}{56.22}\\
MiniCPM-2B-SFT& 38.83& 37.65& 37.94& 53.73& 53.29&52.84\\
Gemma-2B& 28.16& 28.06& 25.29& 42.04& 42.49&42.96\\
Gemma-2B-it& 30.04& 31.13& 29.34& 41.70& 43.24&42.66\\
Llama-2-7B& 28.06& 24.70& 24.90& 45.14& 44.43&37.96\\
Llama-2-7B-Chat& 26.18& 25.10& 25.20& 48.18& 47.29&39.39\\
Phi-2& 25.00& 25.30& 23.32& 50.05& 45.42&45.59\\
Qwen1.5-1.8B-Chat& 42.98& 43.87& 41.50& 52.16& 52.50& 52.16\\
Qwen1.5-4B-Chat& 51.98& 50.49& 50.99& 61.19& 61.02& 61.06\\
\bottomrule
\end{tabular}
\caption{Comparison with RAG on professional tasks.}
\label{tab:domain-results}
\end{table}

The results are shown in Table \ref{tab:domain-results}, and \name{} achieves better performance than most of the models.
All evaluations use 5-shot prompting.
The RAG models retrieve from the same knowledge bases and FAISS indices, except that they receive text references instead of explicit memories.
They only retrieve once for each question, using only the question text for query, so the 5-shot examples do not distract the retrieval.
Since the optimal number of references is not known for these RAG models, we test them for 3, 5, and 7 references per question, and it seems that $3\sim5$ references are optimal.
The usual formatting for RAG is used, i.e. header 1 + reference 1 + reference 2 + reference 3 + header 2 + few-shot examples + question, all separated by line breaks.

The performance plotted in Figure \ref{fig:eval plots} (right) is the average of the scores of the two tasks in Table \ref{tab:domain-results} with five references.

\subsection{Inference Speed} 
\label{sec:speed}

Finally, we evaluate the decoding speed or throughput of \name{}, measured by generated tokens per second.
The results are compared to those of RAG models, to quantify the speedup of explicit memory over text retrieval.

A direct comparison of speeds is uninformative:
The memory hierarchy (Figure \ref{fig:knowledge_distribution}) implies that the \name{} model is more reliant on retrieval to supply knowledge, and naturally \name{} performs retrieval with higher frequency ($5$ references per 64 tokens, possibly higher in future versions).
Therefore, it is necessary to jointly compare performance and speed.
The speed measured in this section is plotted against the retrieval-augmented test accuracy from Section \ref{sec:profession}, resulting in Figure \ref{fig:eval plots} (right).


We measure decoding speed on a A800 GPU, and run all models with Flash Attention \cite{dao2022flashattention}.
All models receive an input of batch size 32 and length 128 tokens, and generate an output with length 128 tokens.
The throughput is computed by $32\times 128$ divided by the time spent.
We test each model 9 times, remove the first record, and take the average of the rest.
\name{} performs $2\times128/64-1=3$ retrievals (the $-1$ means that the first decoded chunk inherits the explicit memories retrieved by the last input chunk).
Each retrieval uses 32 queries to get $32\times 5$ explicit memories.
We consider the warm start scenario, with the explicit memories precomputed and saved to drives.
We implement the worst case scenario, such that the reference ids are reset to be unique after vector search and the memory cache on RAM is disabled, forcing \name{} to load $32\times 5$ memories from drives.
Meanwhile, each RAG model performs one retrieval with query length 64 tokens, receives 5 references for each sample, and inserts them at the beginning of the sample, similar to the setup for Table \ref{tab:domain-results}.

The results are listed in Table \ref{tab:throughput} (local server).
The throughput of these models without retrieval is also provided.

\begin{table}[h]
\footnotesize
\centering
\begin{tabular}{lccccc}
\toprule
& & \multicolumn{2}{c}{\textbf{Local server}} & \multicolumn{2}{c}{\textbf{End-side device}} \\
\cmidrule(lr){3-4} \cmidrule(lr){5-6}
\textbf{LLM} & \textbf{Size} & with retrieval & w/o retrieval & with retrieval & w/o retrieval \\
\midrule
\name{}-2B                & 2.4B & 733.0  & 1131     & 27.6      & 44.36 \\
MiniCPM-2B                & 2.4B & 501.5  & 974.0    & 21.7      & 51.79 \\
Gemma-2B-it               & 2.0B & 1581   & 2056     & 22.0      & 29.23 \\
Gemma-7B-it               & 7.8B & 395.6  & 1008     & 9.5       & 18.61 \\
Mistral-7B-Instruct-v0.1  & 7.0B & 392.9  & 894.5    & 11.1 & 28.7\\
Llama-2-7B-Chat           & 6.5B & 382.8  & 1005     & 10.0& 23.19\\
Llama-2-13B-Chat          & 13B  & 241.1  & 632.5    & 2.5& 5.44\\
Qwen1.5-1.8B-Chat         & 1.2B & 908.2  & 1770     & -& -\\
Qwen1.5-4B-Chat           & 3.2B & 460.7  & 1002     & 22.3& 53.39\\
Qwen1.5-7B-Chat           & 6.5B & 365.8  & 894.5    & -& -\\
Phi-2                     & 2.5B & 622.2  & 1544     & -& -\\
\bottomrule
\end{tabular}
\caption{Inference throughput, measured by tokens per second.}
\label{tab:throughput}
\end{table}

In addition, we study the throughput of these models when they are hosted on an end-side device and retrieve from a knowledge base on a remote server.
Specifically, we use Jetson AGX Orin, and the server uses the vector engine MyScale \cite{myscale}.
The models are run with plain attention, with batch size 1.
To simulate real-world use cases, the input is a fixed text prompt, with approximately 128 tokens, while the exact length can vary among different tokenizers.
The output length is fixed to be 128 tokens.
The results are listed in Table \ref{tab:throughput} (end-side device), and the \name{} model .

\begin{remark}
\label{remark:memory time cost}
Table \ref{tab:throughput} indicates that our \name{}-2B model is $1 - 733/1131 \approx 35.2\%$ slower than the same model without using memory.
This is peculiar considering that reading explicit memories accounts for only a tiny fraction of the total compute:
\begin{equation*}
\frac{2.884\times10^{-3}~\text{TFlops}}{1.264~\text{TFlops}}\approx 0.228\%
\end{equation*}
(The calculations are based on Appendix \ref{appendix:cost}.)
Controlled experiments indicate that the time consumption is mainly due to two sources:
\begin{itemize}
\item Loading the memory key-values from drives to GPU:
This overhead becomes prominent as \name{} retrieves with higher frequency.

\item Python implementation of chunkwise attention:
When encoding a prompt, since each chunk attends to a different set of explicit memories, we use a for loop over the chunks to compute their attentions.
\end{itemize}
They dominate other sources such as computing query vectors by the embedding model and searching the vector index.
We will try to optimize our code to reduce these overheads to be as close as possible to $0.228\%$ of the total inference time, e.g. implement the chunkwise attention with a CUDA kernel.
\end{remark}

\section{Conclusion}
\label{sec:conclude}

The goal of this work is to reduce the cost of LLM training and inference, or equivalently, to construct a more efficient LLM that matches the performance of larger and slower LLMs.
We analyze LLMs from the new perspective of knowledge manipulation, characterizing the cost of LLMs as the transport cost of ``knowledges" in and out of various memory formats.
Two causes of inefficiency are identified, namely the suboptimal placement of knowledges and the knowledge traversal problem.
We solve both problems with explicit memory, a novel memory format, along with a new training scheme and architecture.
Our preliminary experiment, the \name{}-2B model, exhibits stronger abilities and higher speed than many SOTA models with greater sizes as well as RAG models.

For future work, we plan to explore the following directions:
\begin{enumerate}
\item Efficient training with abstract knowledges: Ideally, the training cost of \name{} model should be proportional to the small amount of non-separable knowledges, approaching the learning efficiency of humans.
One approach is to filter the training data to maximize abstract knowledges and minimize specific knowledges (cf. Section \ref{sec:train design} and Remark \ref{remark:data filter}), and preferably the LLM should assess the quality of its own training data and ignore the unhelpful tokens.

\item Human-like capabilities: As described in the introduction, the explicit memory allows for interesting cognitive functions such as handling infinite contexts (conversion of working memory to explicit memory), memory consolidation (conversion of explicit memory to implicit memory), and conscious reasoning (reflection on the memory recall process).
These designs may further improve the efficiency and reasoning ability of \name{}.

\item Compact representation of explicit memory: The explicit memory of humans can be subdivided into episodic memory, which involve particular experiences, and semantic memory, which involve general truths \cite{kandel2021principles}.
This classification is analogous to our definition of specific and abstract knowledges.
Our current implementation of explicit memory is closer to the episodic memory of humans, as each memory directly corresponds to a reference text.
To improve its reasoning ability, one can try to equip \name{} with semantic memories, e.g. obtained from induction on the episodic memories.
\end{enumerate}

Besides these broad topics, there are also plenty of engineering works that can be done.
For instance, an internalized retrieval process that matches sparse attention queries with memory keys (Remark \ref{remark:self-retrieval}), sparser memory heads with routing (Remark \ref{remark:adaptive head}), memory extraction that fully preserves contexts (Remark \ref{remark:context}), compilation of the knowledge base based on machine preference (Remark \ref{remark:ref select}), reduction of the time consumption of explicit memory to be proportional to its compute overhead (Remark \ref{remark:memory time cost}), and so on.

\section*{Acknowledgement}

This work is supported by the NSFC Major Research Plan - Interpretable and General Purpose Next-generation Artificial Intelligence of China (No. 92270001).
We thank Prof. Zhiqin Xu, Prof. Zhouhan Lin, Fangrui Liu, Liangkai Hang, Ziyang Tao, Xiaoxing Wang, Mingze Wang, Yongqi Jin, Haotian He, Guanhua Huang, Yirong Hu for helpful discussions.

\bibliographystyle{plain}
\bibliography{General/References.bib}


\appendix 

\section{Cost Estimation}
\label{appendix:cost}

This section provides the calculations for Figure \ref{fig:total_cost_usage_2B}, and we equate cost with the amount of compute measured in Tflops.

Our 2.4B \name{} model is adopted as the backbone.
Recall from Section \ref{sec:model shape} that this model has shape
\begin{itemize}
\item Transformer blocks $L=44$
\item Query heads $H=40$ and key-value heads $H_{kv}=8$
\item Head dimension $d_h=80$ and hidden dimension $d=Hd_h=3200$
\item MLP width $W=d$
\item Vocabulary size as well as LM head size $n_{\text{vocab}}=60416$
\item memory layers $L_{\text{mem}}=22$, which is also the depth of the deepest memory layer.
\end{itemize}
Fix a separable knowledge $\mathcal{K}$, and represent it by one of its realizations $\mathbf{t}$ (Definition \ref{def:realization}), and assume that $\mathbf{t}$ has length $l_{\text{ref}}=128$ tokens, following the setup of our reference dataset (Section \ref{sec:ref}).
Recall from Section \ref{sec:sparse memory} that each memory has $l_{\text{mem}}=8$ tokens per memory head, and it is read by a chunk of length $l_{\text{chunk}}=64$.

Since we want to show that explicit memory is cheaper than implicit memory and RAG, it suffices to use coarse lower bounds on their costs.

\subsection{Implicit Memory}

The write cost of implicit memory or model parameters is the training compute with $\mathbf{t}$ as input.
Usually the training data of Transformer LLMs have length $2048\sim 8192$, so we assume that $\mathbf{t}$ is a subsequence of a train sample $\mathbf{t}_{\text{train}}$ with length $l_{\text{train}}=2048$.
By \cite{narayanan2021efficient}, the training compute of one step with one sample is approximately
\begin{equation*}
3 \cdot 2 \cdot \big[ L \big( l_{\text{train}} (2d^2 + 2dd_h H_{kv} + 3dW) + 2 \frac{l_{\text{train}}^2}{2} d \big) + l_{\text{train}} n_{\text{vocab}} d \big]
\end{equation*}
where $3$ means that the backward step costs twice as the forward step (and thus 3 times in total), the first $2$ means that the compute of matrix multiplication involves same amount of additions and multiplications.
The five terms in the bracket come from QO embedding, KV embedding, MLP, attention, and LM head, respectively.
The lower order terms, such as layer normalizations, are omitted.
The fraction of the compute attributable to $\mathbf{t}$ is given by
\begin{equation*}
3 \cdot 2 \cdot \big[ L \big( l_{\text{ref}} (2d^2 + 2dd_h H_{kv} + 3dW) + 2 l_{\text{ref}} \frac{l_{\text{train}}}{2} d \big) + l_{\text{ref}} n_{\text{vocab}} d \big]
\end{equation*}
Assume that one training step is sufficient for storing knowledge $\mathcal{K}$ into model parameters.
Then, the write cost is equal to the above term, and we obtain
\begin{equation*}
\text{cost}_{\text{write}} \approx 2.24 ~\text{TFlops}
\end{equation*}

Meanwhile, we lower bound the read cost by zero.
\begin{equation*}
\text{cost}_{\text{read}} \geq 0~\text{TFlops}
\end{equation*}
This lower bound is obviously correct and suits our comparison, since it makes implicit memory appear more competitive.
The difficulty in estimating the cost is that the correspondence between knowledges and parameters is not fully understood.
Nevertheless, we describe a possible way to obtain a reasonable bound.
Recall from Section \ref{sec:intro} that the model parameters suffer from the issue of knowledge traversal such that each parameter (and thus each implicit memory) is invoked during each call of the LLM.
So the read cost of each implicit memory does not depend on its usage count $n_k$, but instead on the total amount of model calls during the lifespan of this LLM.
Dividing the total amount of inference compute used by this LLM by the amount of knowledges it possesses gives an estimation of the average read cost of a knowledge.
The amount of knowledges in the LLM can be upper bounded based on the knowledge capacities measured by \cite{allenzhu2024physics}.

\subsection{Explicit Memory}

The write cost of an each explicit memory mainly comes from $L_{\text{mem}}$ self-attention layers, $L_{\text{mem}}-1$ MLP layers, and $L_{\text{mem}}$ token sparsification operations (computing the full attention matrix):
\begin{align*}
\text{cost}_{\text{write}} &= 2 \cdot \big[ L_{\text{mem}} \big( l_{\text{ref}} (2d^2 + 2dd_h H_{kv}) + 2 \frac{l_{\text{ref}}^2}{2} d \big) + (L_{\text{mem}}-1) (l_{\text{ref}} \cdot 3dW) + L_{\text{mem}} (l_{\text{ref}}^2 d) \big]\\
&\approx 0.308 ~\text{TFlops}
\end{align*}

The read cost consists of the attention to the sparse tokens of an explicit memory from the chunk that retrieves this memory:
\begin{equation*}
\text{cost}_{\text{read}} = 2 L_{\text{mem}} \cdot 2 l_{\text{chunk}} l_{\text{mem}} d \approx 1.44 \times 10^{-4} ~\text{TFlops}
\end{equation*}

\subsection{External Information}

The write cost of text retrieval-augmented generation (RAG) is set to be zero, since the reference is stored as plain text.
\begin{equation*}
\text{cost}_{\text{write}} = 0~\text{TFlops}
\end{equation*}

The read cost is the additional compute brought by the retrieved references that are inserted in the prompt.
To make RAG appear more competitive, we assume that only a chunk of the prompt or decoded text with length $l_{\text{chunk}}$ can attend to the references, and each reference can only attend to itself, which in general is not true.
Then,
\begin{align*}
\text{cost}_{\text{write}} &\geq 2 \cdot \big[ L \big( l_{\text{ref}} (2d^2 + 2dd_h H_{kv}) + 2 l_{\text{ref}} \big( \frac{l_{\text{ref}}}{2} + l_{\text{chunk}} \big) d \big) + (L-1) (l_{\text{ref}} \cdot 3dW) \big]\\
&\approx 0.624 ~\text{TFlops}
\end{align*}

\vspace{0.5em}
In summary, the total cost (TFlops) of writing and reading each separable knowledge in terms of its expected usage count $n$ is given by
\begin{equation*}
\begin{cases}
c_{\text{implicit}}(n) \geq 2.24 \\
c_{\text{explicit}}(n) = 0.308 + 0.000144 n\\
c_{\text{external}}(n) \geq 0.624 n
\end{cases}
\end{equation*}
These curves are plotted in Figure \ref{fig:total_cost_usage_2B}.
Hence, if $n\in (0.494, 13400)$, then it is optimal to store the knowledge as an explicit memory.

\begin{remark}[Knowledge retention]
One aspect not covered by Problem (\ref{eq:cost}) is the retention of knowledges in the model if its parameters are updated, e.g. due to finetuning.
Both implicit memory and explicit memory are vulnerable to parameter change.
Usually, model finetuning would include some amount of pretrain data to prevent catastrophic forgetting \cite{ouyang2022instruct}.
Similarly, if some explicit memories have already been produced, then they need to be rebuilt in order to remain readable by the updated model.
It is an interesting research direction to design a more efficient architecture such that the implicit and explicit memories are robust with respect to model updates.
\end{remark}

\section{Vector Compression}
\label{appendix:quantizer}

Regarding the vector quantizer discussed in Sections \ref{sec:sparse memory} and \ref{sec:eval-general}, we use the composite index of FAISS with index type OPQ20x80-Residual2x14-PQ8x10.
It can encode a 80-dimensional bfloat16 vector into a 14-dimensional uint8 vector, and thus its compression rate is $\frac{80\times 2}{14 \times 1} \approx 11.4$.

To train this quantizer, we sample references from our knowledge base, encode them into explicit memories by our \name{}-2B-SFT model, and feed these key-value vectors to the quantizer.
The references are sampled uniformly and independently, so the training is not biased towards the references that are retrieved by any specific evaluation task.

\section{Supplementary Evaluation Results}
\label{appendix:evaluation}

First, Table \ref{tab:eval stages} records the growth of the test scores (Table \ref{tab:basic-results}) over the three training stages: warmup, continual train, and SFT.
We believe that for future versions of \name{}, fixing the loss divergence during the warmup stage can allow the continual train stage to proceed much further (cf. Section \ref{sec:continual train}), and thus increase the performance boost of this stage.

\begin{table}[h]
\centering
\footnotesize
\rowcolors{2}{Tue-red!10}{white}
\begin{tabular}{ccccc>{\centering\arraybackslash}p{0.075\linewidth}ccc}
\toprule
                &         & \multicolumn{5}{c}{English}& \multicolumn{2}{c}{Chinese}\\ 
                                      \cmidrule(lr){3-7}                                                  \cmidrule(lr){8-9}
LLM& Avg.    & ARC-C& HellaSwag& MMLU& Winogrande& GSM8k& CEVAL &CMMLU\\
\midrule
Warmup & 42.13&	40.27&	64.57&	41.62&	61.96&	5.23&	40.12&	41.17\\
Continual train & 45.12&	42.66&	79.21&	41.81&	59.43&	6.29&	42.20&	44.21\\
- without memory & 42.89&	42.15&	66.98&	39.79&	61.80&	6.44&	39.97&	43.13\\
SFT & 63.31& 58.11& 80.51& 59.68& 74.51& 52.84& 59.29& 58.24\\
- without memory & 60.80& 57.42& 73.14& 57.29& 74.35& 51.33& 56.32& 55.72\\
\bottomrule
\end{tabular}
\caption{Performance of \name{}-2B at different stages of training.
The setup of the evaluation tasks is the same as in Table \ref{tab:basic-results}.}
\label{tab:eval stages}
\end{table}

Next, recall that for the evaluations in Section \ref{sec:eval-general}, a filter is included in the retrieval process to prevent copying, which removes references that overlap too much with the evaluation question.
The filtering threshold should lie between $100\%$ and the usual level of overlap between two related but distinct texts, and we set it to $2/3$ in Table \ref{tab:basic-results}.
Table \ref{tab:filter threshold} records the impact of the filtering threshold on the test scores.
The scores are stable for most tasks, indicating that their questions do not appear in our knowledge basis.

\begin{table}[h]
\centering
\footnotesize
\rowcolors{2}{Tue-red!10}{white}
\begin{tabular}{ccccc>{\centering\arraybackslash}p{0.075\linewidth}ccc}
\toprule
Threshold& Avg.    & ARC-C& HellaSwag& MMLU& Winogrande& GSM8k& CEVAL &CMMLU\\
\midrule
no filter & 63.71&	58.11&	83.37&	59.65&	74.51&	52.84&	59.29&	58.22\\
80\% & 63.62&	58.11&	82.69&	59.65&	74.51&	52.84&	59.29&	58.24\\
2/3 & 63.31& 58.11& 80.51& 59.68& 74.51& 52.84& 59.29& 58.24\\
without memory & 60.80& 57.42& 73.14& 57.29& 74.35& 51.33& 56.32& 55.72\\
\bottomrule
\end{tabular}
\caption{Influence of the filtering threshold on the test scores in Table \ref{tab:basic-results}.}
\label{tab:filter threshold}
\end{table}

Finally, Table \ref{tab:few-shot vs 0-shot} studies the influence of the few-shot prompts on the benchmark tasks.
Recall that the number of few-shot examples for each task is ARC-C (25), HellaSwag (10), MMLU (5), Winogrande (5), GSM8k (5) as in HuggingFace OpenLLM Leaderboard \cite{open-llm-leaderboard}, and we also adopt CEVAL (5), CMMLU (5).
Interestingly, the boost from explicit memory increases from $2.51\%$ to $3.70\%$ as we switch to 0-shot.

\begin{table}[h]
\centering
\footnotesize
\rowcolors{2}{Tue-red!10}{white}
\begin{tabular}{ccccc>{\centering\arraybackslash}p{0.075\linewidth}ccc}
\toprule
Mode& Avg.    & ARC-C& HellaSwag& MMLU& Winogrande& GSM8k& CEVAL &CMMLU\\
\midrule
Few-shot & 63.31& 58.11& 80.51& 59.68& 74.51& 52.84& 59.29& 58.24\\
- without memory & 60.80& 57.42& 73.14& 57.29& 74.35& 51.33& 56.32& 55.72\\
0-shot & 58.23&	58.79&	83.29&	60.53&	75.85&	13.50&	57.95&	57.74\\
- without memory & 54.54&	57.34&	73.15&	58.59&	74.98&	10.46&	54.53&	54.26\\
\bottomrule
\end{tabular}
\caption{Few-shot versus 0-shot for the benchmark tasks in Table \ref{tab:basic-results}.}
\label{tab:few-shot vs 0-shot}
\end{table}

\end{document}